\definecolor{mygray}{gray}{0.85} 
\begin{document}
\begin{spacing}{1.0}
\title{Joint Optimization of Cooperation Efficiency and Communication Covertness for Target Detection with AUVs
}
\author{ 
Xueyao~Zhang,~\IEEEmembership{Student Member,~IEEE,} 
Bo~Yang,~\IEEEmembership{Senior Member,~IEEE,} Zhiwen Yu,~\IEEEmembership{Senior Member,~IEEE,}  Xuelin Cao,~\IEEEmembership{Senior Member,~IEEE,}  Wei Xiang, Bin Guo,~\IEEEmembership{Senior Member,~IEEE,} Liang Wang,~\IEEEmembership{Member,~IEEE,}   Billy Pik Lik Lau,~\IEEEmembership{Senior Member,~IEEE,} George C. Alexandropoulos, \IEEEmembership{Senior Member,~IEEE,} Jun Luo,\\ \IEEEmembership{Fellow,~IEEE}, M\'erouane Debbah,~\IEEEmembership{Fellow,~IEEE},  Zhu Han,~\IEEEmembership{Fellow,~IEEE}, 
and Chau~Yuen,~\IEEEmembership{Fellow,~IEEE}
 \thanks{X. Zhang, B. Yang, L. Wang, and B. Guo are with the School of Computer Science, Northwestern Polytechnical University, Xi'an, Shaanxi, 710129, China (email: 2024263006@mail.nwpu.edu.cn, yang$\_$bo, liangwang, guob@nwpu.edu.cn). 

Z. Yu is with the School of Computer Science, Northwestern Polytechnical University, Xi'an, Shaanxi, 710129, China, and Harbin Engineering University, Harbin, Heilongjiang, 150001, China (email: zhiwenyu@nwpu.edu.cn).

 X. Cao is with the School of Cyber Engineering, Xidian University, Xi'an, Shaanxi, 710071, China (email: caoxuelin@xidian.edu.cn). 

 W. Xiang is with Yichang Testing Technology Research Institute, Yichang,  Hubei, 443000, China, and School of Computer Science, Northwestern Polytechnical University, Xi'an, Shaanxi, 710129, China (email: 46635991@qq.com)

 G. C. Alexandropoulos is with the Department of Informatics and Telecommunications, National and Kapodistrian University of Athens, 16122 Athens, Greece (email: alexandg@di.uoa.gr). 

J. Luo is with the College of Computing and Data Science, Nanyang
Technological University, Singapore 639798 (email: junluo@ntu.edu.sg).

M. Debbah is with  KU 6G Research Center, Department of Computer and Information Engineering, Khalifa University, Abu Dhabi 127788, UAE (email: merouane.debbah@ku.ac.ae).

Z. Han is with the Department of Electrical and Computer Engineering at the University of Houston, Houston, TX 77004 USA, and also with the Department of Computer Science and Engineering, Kyung Hee University, Seoul, South Korea, 446-701(email: hanzhu22@gmail.com).

B. P. L. Lau and C. Yuen are with the School of Electrical and Electronics Engineering, Nanyang Technological University, Singapore (email: billy.laupl, chau.yuen@ntu.edu.sg).}
}


\maketitle

\begin{abstract}
This paper investigates underwater cooperative target detection using autonomous underwater vehicles (AUVs), with a focus on the critical trade-off between cooperation efficiency and communication covertness. To tackle this challenge, we first formulate a joint trajectory and power control optimization problem, and then present an innovative hierarchical action management framework to solve it. According to the hierarchical formulation, at the macro level, the master AUV models the agent selection process as a Markov decision process and deploys the proximal policy optimization algorithm for strategic task allocation. At the micro level, each selected agent's decentralized decision-making is modeled as a partially observable Markov decision process, and a multi-agent proximal policy optimization algorithm is used to dynamically adjust its trajectory and transmission power based on its local observations. Under the centralized training and decentralized execution paradigm, our target detection framework enables adaptive covert cooperation while satisfying both energy and mobility constraints. By comprehensively modeling the considered system, the involved signals and tasks, as well as energy consumption, theoretical insights and practical solutions for the efficient and secure operation of multiple AUVs are provided, offering significant implications for the execution of underwater covert communication tasks.
\end{abstract}

\begin{IEEEkeywords}
Autonomous underwater vehicles, target detection, covert communications, cooperation, multi-agent deep reinforcement learning, proximal policy optimization.
\end{IEEEkeywords}

\IEEEpeerreviewmaketitle

\section{Introduction}
\IEEEPARstart{C}{urrently}, the rapid breakthroughs in artificial intelligence (AI) technology and the continuous evolution of communication network technology are profoundly driving various industries towards a new stage of extensive intelligence and large-scale networking. Against the backdrop of this era of technological integration, the demand for efficient exploration, precise perception, and intelligent operation in the vast and complex ocean space is becoming increasingly urgent, driving underwater agents (such as autonomous underwater vehicles (AUVs) and underwater sensors) to gradually evolve from the traditional isolated operation mode to a new operation paradigm with a distributed structure and cluster collaboration capabilities \cite{Underwater}. As a key approach towards this demand, the Internet of Underwater Things (IoUT) has emerged as a novel paradigm, aiming to build a collaborative network composed of heterogeneous underwater nodes, enabling seamless integration of environmental sensing, communications, and intelligent control \cite{IoU01,IoU02}.
By constructing the IoUT, not only can the operation range be significantly expanded and the efficiency of information acquisition and processing be improved, but also the adaptability and robustness of the system in the unknown and dynamic underwater environment can be enhanced, thereby relying on collaborative mechanisms among agents.

\subsection{Background and Challenges}
AUVs constitute a highly autonomous and intelligent platform for underwater transportation and operations. They offer significant advantages due to their tether-free operation and the ability to carry a diverse array of sensors for prolonged and extensive exploratory missions \cite{AUV}. As a result, AUVs have been widely deployed in fields such as environmental monitoring and resource exploration. However, when executing long-term or large-scale tasks in complex and dynamic underwater environments, individual AUVs often face limitations due to their restricted onboard energy and limited sensing range, making it difficult to meet task requirements effectively. Consequently, establishing a distributed collaborative network composed of multiple agents has emerged as an effective approach to enhance the efficiency of underwater task execution and the overall capabilities of the system.

However, the challenges faced during underwater operations are significant and cannot be overlooked. The underwater acoustic channel, which serves as the primary communication medium, is frequently plagued by issues, such as high propagation delay, limited bandwidth, pronounced multipath effects, and susceptibility to interference \cite{underwater_challenge}, all of which severely restrict the efficiency and reliability of information exchange between agents. Additionally, difficulties in precise underwater positioning and navigation, coupled with the limited energy and computational resources of the agents themselves, impose stringent requirements on the design and implementation of multi-agent collaborative mechanisms.
Moreover, in certain underwater application scenarios, such as military reconnaissance, environmental monitoring in sensitive areas, and specialized underwater operations, covertness becomes a paramount requirement that must take precedence over collaboration. Traditional open communication and explicit cooperative behaviors can easily reveal the presence of agents, potentially leading to mission failure and systemic risks.

Due to the critical importance of covertness, the current deployment and application of underwater agents largely favor an ``each for themselves" independent operational mode, aimed at minimizing the risk of detection by non-cooperative entities \cite{joint}. While progress has been made in enhancing the covert navigation capabilities of individual AUVs, designing low-probability intercept (LPI) communication waveforms, and exploring biomimetic covertness communications often comes at the cost of sacrificing the potential for multi-agent collaboration. 
Under the imperative of maintaining covertness, a significant challenge arises: how to break down the ``information silos" between agents and enable effective sharing of situational awareness, task coordination, and resource optimization. Achieving this, while ensuring system security and fully leveraging the advantages of collaborative strategies, has become a critical bottleneck that urgently needs to be addressed in the field of underwater intelligent systems research.

\subsection{Motivation and Contributions}
When addressing the complex decision-making challenges of achieving covert collaboration in the aforementioned dynamic and resource-constrained underwater environments, traditional optimization methods \cite{trad} face significant limitations. For instance, conventional centralized approaches based on convex optimization theory can find global optimal solutions under specific conditions. However, they typically rely on precise mathematical descriptions of the environmental model and the complete availability of global information. 
This reliance results in these methods providing only static solutions tailored to specific scenarios or environmental parameters. Once disturbances occur, the model must be reconstructed and the optimization solution needs to be re-computed. Such sensitivity to environmental changes leads to insufficient generalization in dynamic scenarios, making it difficult to meet the real-time and fast-changing requirements of underwater operations, thereby limiting the practical application potential of these systems in complex underwater settings.

In recent years, multi-agent deep reinforcement learning (MADRL) has emerged as an advanced framework that integrates distributed agent modeling, autonomous decision-making, and environmental interaction mechanisms, offering a fresh perspective and valuable tools for solving multi-agent collaborative control problems. MADRL enables each agent to autonomously learn optimal behavioral strategies through interactions with the environment, fostering collaboration among agents to achieve common objectives. Furthermore, MADRL exhibits strong policy transfer capabilities and dynamic adaptability, allowing it to continuously optimize decision-making behaviors in uncertain and frequently disturbed underwater environments. The significant advantages of MADRL in distributed collaboration, adaptation to dynamic environments, and learning-to-optimize complex behaviors provide a promising research framework for addressing the covert collaboration challenges faced by underwater multi-agent systems.

To our knowledge, this is an early study addressing the multi-AUVs collaboration problem while ensuring underwater covertness. To achieve this, we consider the high dynamics of the underwater environment, the communication limitations of the acoustic channel, and the distributed nature of agentic decision-making. Integrating agent task allocation and trajectory planning, we formulate a dual-scale cooperative optimization problem. Furthermore, we develop a hierarchical multi-agent proximal policy optimization (HMAPPO) framework tailored for covert underwater multi-agent collaboration. This framework aims to collaboratively optimize macro-level agent allocation and micro-level trajectory control through intelligent hierarchical decision-making and learning mechanisms, all while maintaining the overall system covertness to maximize the collaborative task-handling capabilities of the agents' cluster.

The remainder of the paper is organized as follows. Section~\ref{related-works} reviews related works on underwater acoustic communications, covert transmission, and multi-agent reinforcement learning, while Section~\ref{system} introduces the considered underwater multi-agent system model and presents the formulation of the covert cooperation problem. In Section~\ref{method}, the proposed HMAPPO design framework is detailed. Section~\ref{result} presents the paper's numerical investigations, including our design's convergence, scalability, and the performance trade-off with covertness. Finally, Section~\ref{conclusion} concludes the paper.

\section{Literature Review} \label{related-works}

\subsection{Underwater Covert Communications}
In recent years, research on underwater convert communications has become increasingly active. The goal is to achieve reliable information transmission between legitimate users without being detected by unauthorized parties (eavesdroppers or detectors). Currently, the main technologies for implementing underwater covert communication include: \textit{i}) \textit{Spread spectrum}: 
By spreading the signal energy over a wider frequency band, the power spectral density can be significantly reduced, making the signal indistinguishable from background noise, thereby yielding low probability of interception (LPI) and low probability of detection (LPD)~\cite{ss01,ss02}.
\textit{ii}) \textit{Power control}: 
This approach adaptively adjusts the transmit power to blend the signal into the noise at the eavesdropper's received signal. By strategically limiting the power level, the transmitter ensures a low signal-to-noise (SNR) at the eavesdropper, forcing a high detection error rate, thus, effectively hiding the transmission within channel uncertainty~\cite{power_control01} \cite{power_control02}.
\textit{iii} \textit{Waveform design}: \cite{covert} proposed to utilize the noise in the marine environment to design the communication signal into a waveform that is statistically similar to the background noise, or to use the background noise as a covert for transmission \cite{noise}.

Commonly used methods for underwater signal analysis include energy detection methods~\cite{detection}, which, however, exhibit significant limitations in low SNR conditions. A theoretically optimal detection scheme based on the likelihood ratio test (LRT) was presented in \cite{LRT}, which shown to be capable of improving detection performance under consistent statistical characteristics of information-bearing signals and noise. However, the implementation of LRT requires precise statistical properties, which often present challenges in practical applications. In~\cite{LPI}, the impact of LPI and LPD in covert communications was further explored. Finally, \cite{joint} derived the achievable covert communication rates between users under the constraint of Kullback-Leibler (KL) divergence.

\subsection{Multi-Agent Deep Reinforcement Learning Approaches}
MADRL aims to enable multiple agents to learn how to collaborate (or compete) in achieving common or individual goals through interactions with the environment and with each other. Existing MADRL algorithm frameworks include the independent learners framework \cite{IL}, where each agent treats other agents as part of the environment and learns its strategy independently; however, this approach faces challenges related to non-stationarity. 
Policy gradient methods constitute another commonly used framework, such as the multi-agent deep deterministic policy gradient (MADDPG) method \cite{DDPG}, which optimizes via deep learning a deterministic policy for each agent along with a centralized critic network. The Centralized Training with Decentralized Execution (CTDE) architecture presengted in~\cite{CTDE} is currently one of the most widely applied frameworks in the field. During the training phase, agents are allowed to access global state information to learn a centralized evaluation function; however, during the execution phase, agents rely solely on local observations for decision-making. This framework alleviates non-stationarity issues to some extent, facilitating implicit coordination among agents.

MADRL has demonstrated a broad range of application prospects in diverse fields, such as robotics and drone collaboration, including formation control \cite{MADRL01}, path planning \cite{MADRL02}, target search \cite{MADRL03}, and task allocation \cite{MADRL04}. These applications demonstrate the substantial potential of MADRL in managing dynamic interactions among multiple agents and in learning complex collaborative strategies. In \cite{MADRL1}, a hierarchical heterogeneous multi-agent framework for marine target searching, which integrates UAVs, surface vessels, and underwater vehicles, and enhances system coordination through adaptive task decomposition was presented. In the aerial network domain~\cite{MADRL2}, a hierarchical computing system composed of high-altitude platforms and unmanned aerial vehicles (UAVs), employing a MAPPO-based algorithm for efficient resource management and task offloading was proposed. A clustered multi-UAV system was deployed using MADRL to minimize the computation cost for IoT devices in~\cite{MADRL3}. In \cite{MADRL_GA}, a distributed state-action-reward-state-action algorithm deciding the optimal sequential decision-making policies of multiple UAVs with the goal of data traffic offloading from terrestrial base stations was proposed. In the context of sustainable underwater networks~\cite{efficiency}, the joint optimization of data throughput and energy harvesting was modeled as a Markov decision process (MDP), and reinforcement learning was deployed to determine the optimal trajectory of AUVs for data collection and energy transfer. A DDPG-based two-timescale scheme was developed in~\cite{MADRL5} to jointly optimize long-term service caching and short-term resource allocation in edge computing. This hierarchical paradigm was further extended to the electric bus charging scenario in~\cite{MADRL6}. A double actor-critic MAPPO algorithm, where the high-level agent manages charger allocation and the low-level agent controls charging power, was presented. 

Inspired by the latter recent studies, in this paper, we  design a two-timescale framework to maximize the collaboration efficiency of underwater AUVs for cooperative target detection under covert communication constraints.

\theoremstyle{Observation}
\newtheorem{observation}{\textit{Observation}}
\theoremstyle{Lemma} 
\newtheorem{lemma}{{\textit{Lemma}}}

\section{System Modeling and Problem Formulation} \label{system}
\subsection{System Model}
\begin{figure*}[htpb] 
    \centering
    \captionsetup{font={small}} 
    \subfloat[][]
    {
    \includegraphics[width=1.22\columnwidth]{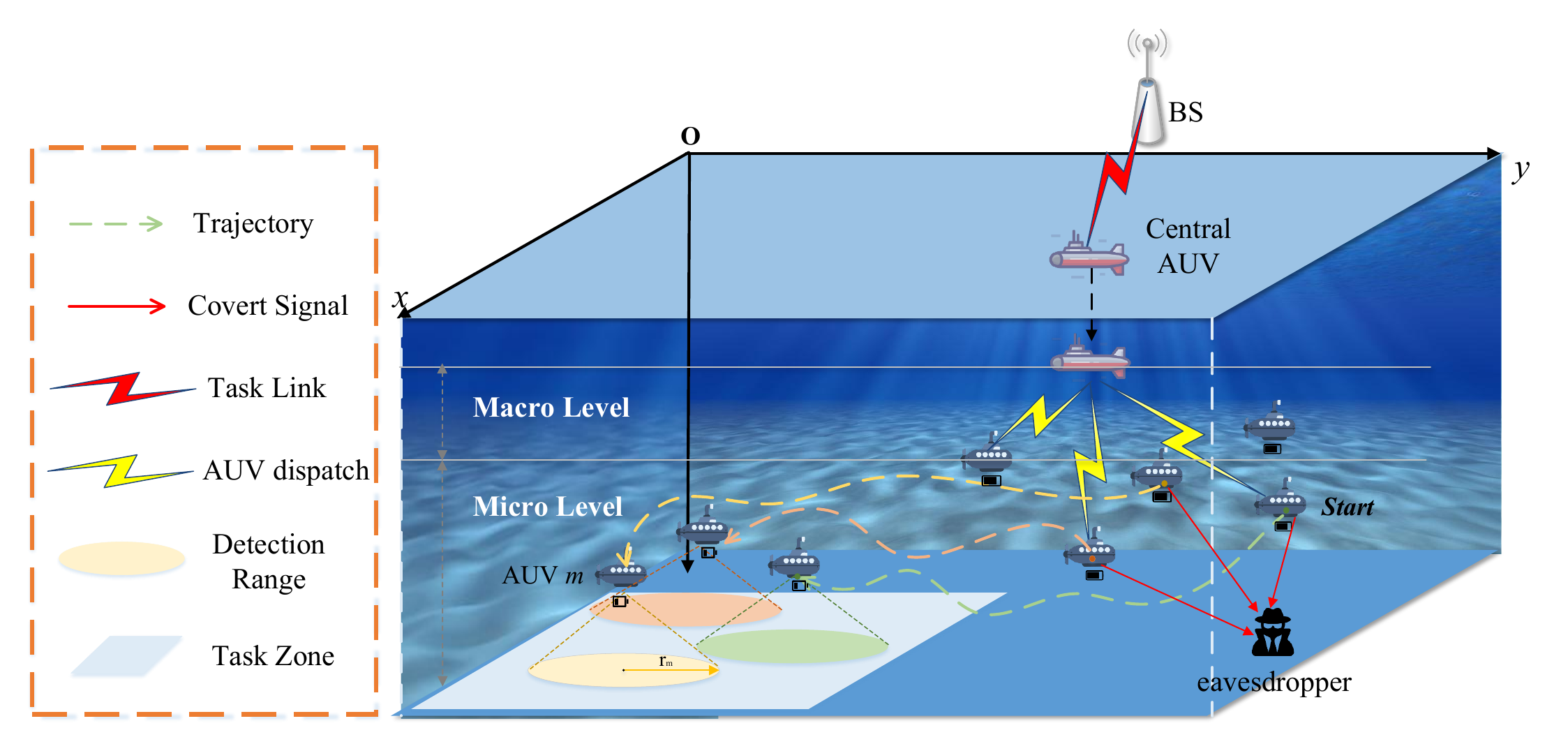}
    \label{scene}
    }
    \hfill
    \subfloat[][]
    {
    \includegraphics[width=0.73\columnwidth]{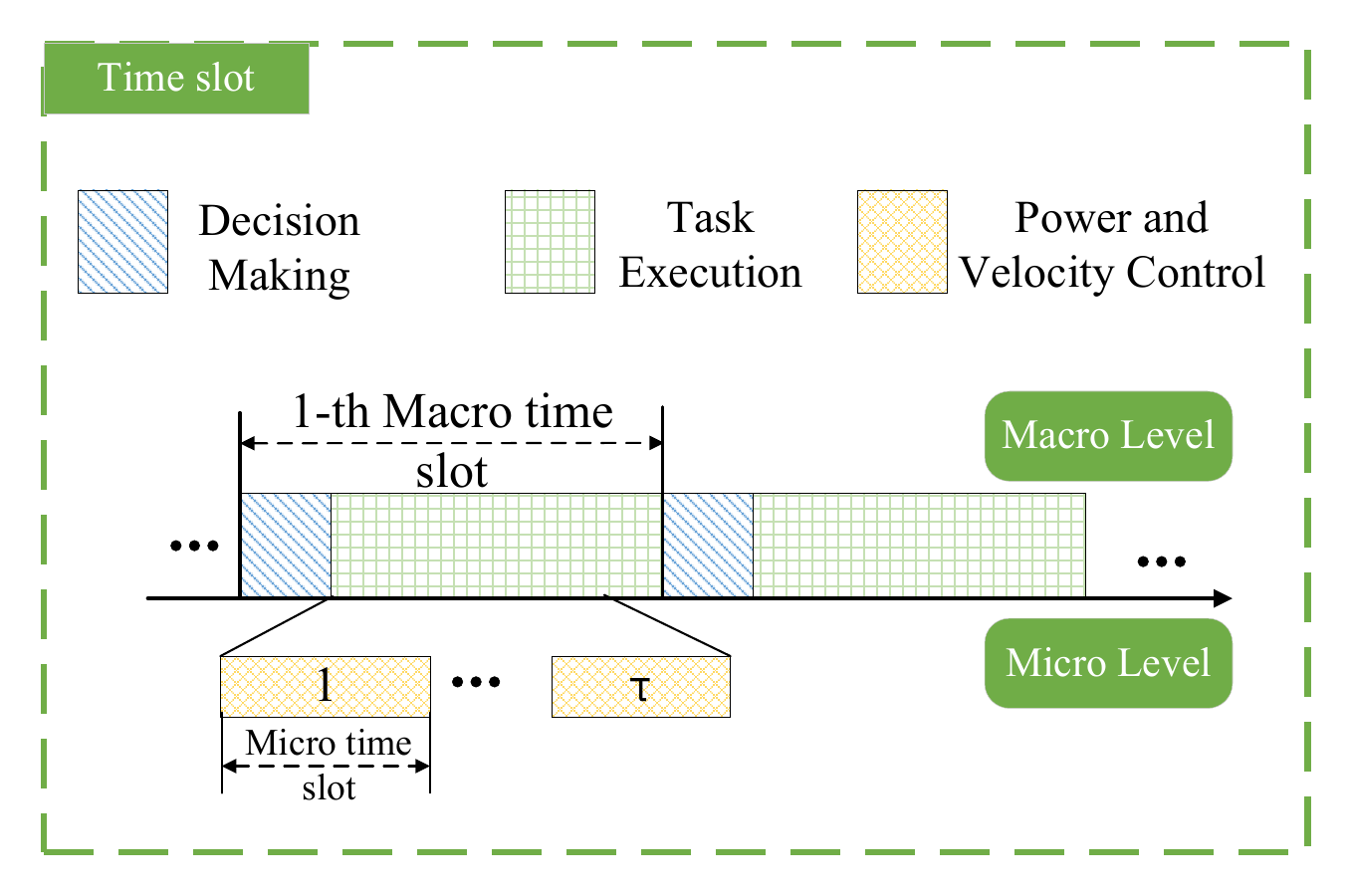}
    \label{timeslot}
    }
    \caption{The considered underwater covert communication scenario for cooperative target detection is shown in (a), and the structure of the communication time slot is illustrated in (b).}
\end{figure*}
We considered a multi-AUV communication system in a three-dimensional (3D) underwater setup, as illustrated in Fig.~\ref{scene}. The system consists of three core components: an AUV serving as the master control node, responsible for receiving and dispatching tasks; a group of $M$ cooperating AUVs, denoted as $\mathcal{M}=\left\{ 1,2,...,M \right\}$; and a passive eavesdropper $D$. The master AUV serves as the core command and control node of the system, with its mission cycle beginning when receiving task instructions from the shore-based base station (BS). This phase typically involves the master AUV surfing to ensure reliable communications. Once this node receives task instructions, which in the case study of this paper involves collaborative exploration of a specific underwater area, it processes and interprets the commands. After receiving the instructions successfully, the master AUV descends to the designated operational depth and begins executing its coordinated management function to accomplish the target detection task.

The considered task involves a long-term and complex decision-making problem consisting of variables spanning different timelines. To this end, we propose to decompose it into subproblems of varying granularity to optimize strategies at different levels. In particular, we define two decision-making hierarchies: one in macro time slots and the other on micro time slots, as depicted in Fig.~\ref{timeslot}.

\begin{itemize}
    \item {Macro time slots $t$}: A macro time slot corresponds to a distinct and complete subtask assigned to the AUV. A single episode consists of multiple consecutive macro time slots, which represent a series of interconnected tasks to be accomplished. At the beginning of each macro time slot, the central AUV makes a crucial macro decision: based on the current task characteristics and the status of the AUVs, it selects an appropriate subset of AUVs to execute the task. Once selected, the group of AUVs typically remains unchanged for the duration of the current macro time slot.
 \item {Micro time slots $\tau$}:  Each macro time slot $t$ can be further divided into multiple consecutive micro time slots, which constitute the actual operations for task execution. Within each micro time slot, the AUVs selected by the central AUV to participate in the current task must make fine-grained decisions about their actions.
 \end{itemize}
 
Note that here, the central AUV actively decides which AUVs to participate in this task based on either a preset strategy or a real-time assessment. This decision-making process is centrally controlled by the central AUV and through the variables $G=\{G_1,G_2,...,G_m\}$, where $G_m \in \{0,1\}$. Among them, $G_m=1$ indicates that the AUV $m$ is selected by the AUV to participate in the current task. The AUV will establish a communication connection with it and issue task instructions. If $G_m=0$, it indicates that the AUV $m$ does not participate in the task within this macro time slot.
Furthermore, the AUVs adjust their 3-D thrust velocity vectors based on the assigned target task location and environmental conditions to achieve effective path planning. At the same time, they must select the appropriate transmission power level in accordance with covertness constraints and communication requirements. Continuous decision-making during micro time slots is critical for the AUVs' navigation in complex environments, efficient task execution, and maintaining stealth from potential eavesdroppers. The duration of a micro time slot defines the minimal time unit for motion control and power adjustment of the AUVs within the system. 
It is assumed that within a single micro time slot, key environmental variables affecting AUV decisions and system interactions are considered quasi-static; however, these variables can evolve independently across successive micro time slots according to their own dynamics or stochastic processes, thereby reflecting the non-stationarity of the operational environment.

In the considered network scenario, $M$ AUVs are randomly distributed in the underwater 3D space, and we denote their position as: $\overset{\to }{\mathop{{{r}_{m}}}}\,\left[ t,\tau  \right]=\left( {{x}_{m}}\left[ t,\tau  \right],{{y}_{m}}\left[ t,\tau  \right],{{z}_{m}}\left[ t,\tau  \right] \right)$. Besides, AUV is located at: $\overset{\to }{\mathop{{{r}_{AUV}}}}\,\left[ t \right]=\left( {{x}_{AUV}}\left[ t \right],{{y}_{AUV}}\left[ t \right],{{z}_{AUV}}\left[ t \right] \right)$. And the eavesdropper is located in a fixed position: $\overset{\to }{\mathop{{{r}_{d}}}}\,=\left( {{x}_{d}},{{y}_{d}},{{z}_{d}} \right).$

The distances from the AUV to the central AUV and between the AUVs, respectively,
\begin{equation}
    {{d}_{mcAUV}}\left[ t,\tau  \right]={{\left\| {{\overset{\to }{\mathop{r}}\,}_{m}}\left[ t,\tau  \right]-{{\overset{\to }{\mathop{r}}\,}_{cAUV}}\left[ t,\tau  \right] \right\|}_{2}},
\end{equation}

\begin{equation}
    {{d}_{ij}}\left[ t,\tau  \right]={{\left\| {{\overset{\to }{\mathop{r}}\,}_{i}}\left[ t,\tau  \right]-{{\overset{\to }{\mathop{r}}\,}_{j}}\left[ t,\tau  \right] \right\|}_{2}},\forall i,j\in M.
\end{equation}
The distance from the AUV to the eavesdropper is denoted as:
\begin{equation}
    {{d}_{md}}\left[ t,\tau  \right]={{\left\| {{\overset{\to }{\mathop{r}}\,}_{m}}\left[ t,\tau  \right]-{{\overset{\to }{\mathop{r}}\,}_{d}} \right\|}_{2}}.
\end{equation}
Suppose that the movement speed of the AUV is ${\overset{\to }{\bm{v}}\,}_{m}$, its current position can be updated according to the time slot:
\begin{equation}
    {{\overset{\to }{\mathop{r}}\,}_{m}}\left[ t,\tau  \right]={{\overset{\to }{\mathop{r}}\,}_{m}}\left[ t,\tau -1 \right]+\Delta\tau {{\overset{\to }{\mathop{v}}\,}_{m}}.
\end{equation}
\subsection{Underwater Channel Model}
Underwater acoustic waves are currently the most effective means of underwater wireless communication and detection \cite{UAC01} \cite{UAC02}. However, the characteristics of underwater acoustic channels are extremely complex, exhibiting significant signal attenuation, pronounced multipath effects, considerable propagation delays, and limited available bandwidth. To effectively evaluate system performance, a reasonable channel model is crucial. In our work, we employ an empirical model based on Thorp's theory to characterize the propagation loss of underwater acoustic signals. The Thorp model considers two primary factors that affect signal strength: spreading loss and absorption loss. \textit{Spreading loss} describes the geometric divergence of energy as sound waves propagate through the medium, typically proportional to a power of the propagation distance. \textit{Absorption loss}, on the other hand, arises from additional signal energy attenuation due to energy conversion during propagation through seawater.

Specifically, the absorption coefficient $\alpha \left( f \right)$ (dB/km) of an acoustic signal propagating through water at a frequency $f$ (KHz) can be approximately calculated using Thorp's empirical formula:
\begin{equation}
    10lg(a\left( f \right))=\frac{0.11{{f}^{2}}}{1+{{f}^{2}}}+\frac{44{{f}^{3}}}{4100+{{f}^{2}}}+2.75\times {{10}^{-4}}{{f}^{2}}+0.003.
\end{equation}
The total path loss factor $A(f,d)$ of the signal after propagating a distance $d$ is determined by both spreading loss and absorption loss. Therefore, the path loss can be expressed as:
\begin{equation}
    A\left( f,d \right)={{d}^{\chi }}a{{\left( f \right)}^{d}}.
\end{equation}
Therefore, the power gain of the channel at frequency $f$ and distance $d$ can be expressed as the inverse of the path loss:
\begin{equation}
    {{g}_{i,j}}\left[ t,\tau  \right]\!=\!\frac{1}{A\left( f,{{d}_{i,j}}\left[ t,\tau  \right] \right)}\!=\!\frac{1}{{{\left( {{d}_{i,j}}\left[ t,\tau  \right] \right)}^{\chi }}a{{\left( f \right)}^{{{d}_{i,j}}\left[ t,\tau  \right]}}},
\end{equation}
where ${{d}_{i,j}}\left[ t,\tau  \right]$ represents the distance between AUVs $i$ and $j$ in micro time slot $\tau$.

The quality of the received signal is also affected by complex and varied background noise. Unlike many terrestrial wireless channels, where noise can be approximated as Gaussian white noise, underwater ambient noise typically exhibits significant frequency dependence. According to the underwater noise model in~\cite{noise_model}, the marine environmental noise consists of four components: turbulence ($N_t$), surface waves ($N_s$), shipping ($N_w$), and thermal noise ($N_{th}$), which are mathematically defined as follows:
\begin{equation}\label{noise}
\begin{aligned}
&10\lg {N}_{t}(f) \!=\! 17 - 30\lg f, \\
&10\lg {N}_{s}(f) \!=\! 30 + 20s + 26\lg f - 60\lg(f + 0.03), \\
&10\lg {N}_{w}(f) \!=\! 50 \!+\! 7.5w^{1/2} \!+\! 20\lg f \!-\! 40\lg(f \!+\! 0.4), \\
&10\lg {N}_{th}(f) \!=\! -15 + 20\lg f.
\end{aligned}
\end{equation}
where $f$ is the frequency in kHz, $s$ denotes the shipping activity factor. $N_t(f)$, $N_s(f)$, $N_w(f)$, and $N_{th}(f)$ represent the turbulence, shipping activity, wind-driven waves, and thermal noise, respectively.
The overall power spectral density of the noise can thus be expressed as
$N\left( f \right)={{N}_{t}}\left( f \right)+{{N}_{s}}\left( f \right)+{{N}_{w}}\left( f \right)+{{N}_{th}}\left( f \right)$.
\vspace{-2mm}
\subsection{Received Signal Model}
The received signal of AUV $m$ from the central AUV is:
\begin{equation}
    {{y}_{m}}\left[ t,\tau  \right]\!=\!
    \left\{ 
\begin{aligned}
  & {{{G}_{m}}\left[ t \right]\sqrt{{{P}_{m}}\left[ t,\tau  \right]{{g}_{m,AUV}}\left[ t,\tau  \right]}}{{s}_{m}}\!+\!{{n}_{m}},\;\;\;\;\;{\mathcal{K}_{1}}, \\ 
 & {{n}_{m}},\;\;\;\;\;\;\;\;\;\;\;\;\;\;\;\;\;\;\;\;\;\;\;\;\;\;\;\;\;\;\;\;\;\;\;\;\;\;\;\;\;\;\;\;\;\;\;\;\;\;\;\;\;\;\;\;\;\;\;{\mathcal{K}_{0}}, \\ 
\end{aligned} \right.
\end{equation}
where, $P_m$ and $s_m$ represent the transmission power and transmitted signal of AUV $m$, respectively.  $\mathcal{K}_1$ indicates active communication and $\mathcal{K}_0$ indicates the AUV is silent. We denote the distribution of $y_d$ under hypotheses $\mathcal{K}_0$ and $\mathcal{K}_1$ as $\mathcal{H}_0$ and $\mathcal{H}_1$, respectively.
The signal model for the eavesdropper $D$ is:
\begin{equation}
    {{y}_{d}}\left[ t,\tau  \right]\!=\!\left\{ 
    \begin{aligned}
  & \sum\limits_{m\in M}{{{G}_{m}}\left[ t \right]\sqrt{{{P}_{m}}\left[ t,\tau  \right]{{g}_{m,d}}\left[ t,\tau  \right]}}{{s}_{m}}\!+\!{{n}_{d}},\;\;\;{\mathcal{K}_{1}}, \\ 
 & {{n}_{d}},\;\;\;\;\;\;\;\;\;\;\;\;\;\;\;\;\;\;\;\;\;\;\;\;\;\;\;\;\;\;\;\;\;\;\;\;\;\;\;\;\;\;\;\;\;\;\;\;\;\;\;\;\;\;\;\;\;\;\;\;\;{\mathcal{K}_{0}}. \\ 
\end{aligned} \right.
\end{equation}

In each micro time slot, participating AUVs transmit signals to the central AUV. Since the signals $s_m$ from different AUVs are non-coherent at the eavesdropper, the total received signal is a linear combination of individual signal components and noise. where $s_m$, $n_m$, and $n_d$ follow the Gaussian distribution of $s_m \sim \mathcal{N}(0,1)$, $n_m \sim \mathcal{N}(0,N_m)$, $n_d \sim \mathcal{N}(0,N_d)$ , respectively. The SNR for the eavesdropper is defined as:
\begin{equation}
    {{\gamma }_{d}}\left[ t,\tau  \right]=\sum\limits_{m\in \mathcal{M}}{\frac{G_{m}^{2}\left[ t \right]{{P}_{m}}\left[ t,\tau  \right]}{{{A}_{m,d}}\left[ t,\tau  \right]{{N}_{d}}}},
\end{equation}
and the data transmission rate between AUVs $m$ and $n$ is:
\begin{equation}
    {{R}_{m,n}}\!\left[ t,\tau  \right]\!\!=\!B{{\log }_{2}}\!\left( 1\!+\!\frac{{{\sum\limits_{m,n\in M}{G_{m}^{2}\left[ t \right]{{P}_{m}}\left[ t,\tau  \right]{{g}_{m,n}}\left[ t,\tau  \right]}}}}{N_n\left( f \right)} \right)\!.
\end{equation}

\subsection{Covertness Model}
For the considered multi-AUV system, we use LRT theory to model the eavesdropper's listening behavior. Since the log-likelihood ratio has a linear monotonic relationship with the energy of the received signal, changing the detection method can be equivalently reduced to a computationally more efficient energy detection scheme. Therefore, for the detection process, we propose directly comparing the energy of the received signal with a threshold value, as stated in the following lemma. 

\begin{lemma}
    Considered a received signal $y_d$ that is modeled as a zero-mean real Gaussian random variable under both the null hypothesis $\mathcal{K}_0$ and the alternative hypothesis $\mathcal{K}_1$, with variances $\delta_0^2$ and $\delta_1^2$ respectively, where $\delta_1^2 > \delta_0^2$. The likelihood ratio test (LRT) used to distinguish between $\mathcal{K}_0$ and $\mathcal{K}_1$ is equivalent to an energy detector of the form:
    \begin{equation}
        {{\left| {{y}_{d}} \right|}^{2}}\succ {{\Theta }^{'}}.
    \end{equation}
    The optimal threshold $\Theta'$ for the energy detector is given by: ${{\Theta }^{'}}=\frac{2\ln \Theta -\ln \left( \frac{\sigma _{0}^{2}}{\sigma _{1}^{2}} \right)}{\frac{\sigma _{1}^{2}-\sigma _{0}^{2}}{\sigma _{0}^{2}\sigma _{1}^{2}}}.$
\end{lemma}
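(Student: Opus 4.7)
The plan is to derive the energy-detector form directly from the definition of the likelihood ratio for two zero-mean Gaussian hypotheses, and then track the algebra until the decision statistic is isolated as $|y_d|^2$.

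First I would write down the conditional densities, $p(y_d \mid \mathcal{K}_i) = (2\pi \sigma_i^2)^{-1/2}\exp\!\bigl(-y_d^2/(2\sigma_i^2)\bigr)$ for $i\in\{0,1\}$, and form the likelihood ratio $\Lambda(y_d) = p(y_d\mid\mathcal{K}_1)/p(y_d\mid\mathcal{K}_0)$. The normalization constants combine into a simple factor $\sigma_0/\sigma_1$, and the two exponentials combine into a single exponential whose argument is quadratic in $y_d$, namely $(y_d^2/2)\bigl(1/\sigma_0^2 - 1/\sigma_1^2\bigr)$. This already exposes the fact that $\Lambda$ is a strictly monotone function of $y_d^2$, which is the structural reason the LRT is equivalent to an energy test.

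Next I would apply the logarithm (a strictly increasing map, so it preserves the LRT decision) to the rule $\Lambda(y_d) \succ \Theta$, obtaining a linear inequality in $y_d^2$. Rearranging to isolate $y_d^2$ and combining the two constant terms $\ln\Theta$ and $\ln(\sigma_0/\sigma_1) = \tfrac{1}{2}\ln(\sigma_0^2/\sigma_1^2)$ yields exactly the expression for $\Theta'$ claimed in the lemma. The one point that requires care is the sign of the coefficient $(\sigma_1^2-\sigma_0^2)/(\sigma_0^2\sigma_1^2)$ that multiplies $y_d^2$: by the hypothesis $\sigma_1^2 > \sigma_0^2$ this coefficient is strictly positive, so dividing by it preserves the direction of the inequality and the resulting test remains equivalent to, rather than opposite to, the original LRT.

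I do not anticipate a real obstacle; the result is a textbook manipulation of Gaussian likelihoods, and the only substantive ingredient is the variance ordering $\sigma_1^2 > \sigma_0^2$ that guarantees monotonicity in the correct direction. Were that assumption removed, the inequality would flip and the equivalent detector would read $|y_d|^2 \prec \Theta'$, detecting a decrease rather than an increase in energy; under the stated hypothesis, the energy-detector form $|y_d|^2 \succ \Theta'$ with the claimed threshold is immediate.
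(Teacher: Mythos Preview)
Your proposal is correct and follows essentially the same route as the paper: form the Gaussian likelihood ratio, take logarithms, observe that the log-likelihood is affine in $|y_d|^2$ with a positive slope because $\sigma_1^2>\sigma_0^2$, and solve for the threshold. The paper's proof contains no additional ideas beyond what you have outlined.
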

\begin{proof}
    We first derive the likelihood ratio function of the received signal $y_d$:
\begin{equation}
    L\left( {{y}_{d}} \right)=\frac{p\left( {{y}_{d}}|{\mathcal{K}_{1}} \right)}{p\left( {{y}_{d}}|{\mathcal{K}_{0}} \right)}.
\end{equation}
Here, $p\left( {{y}_{d}}|{\mathcal{K}_{1}} \right)$ denotes the probability density function (PDF) of a real Gaussian distribution, given by $p(z)\sim \mathcal{N}(0,\sigma^2).$ By substituting the probability density functions (PDFs) of the received signal $y_d$ under hypotheses $\mathcal{K}_0$ and $\mathcal{K}_1$ into the above expression, we can obtain:
\begin{equation}
\begin{aligned}
  & L\left( {{y}_{d}} \right)=\frac{\frac{1}{\sqrt{2\pi \sigma _{1}^{2}}}{{e}^{\left( -\frac{y_{d}^{2}}{2\sigma _{1}^{2}} \right)}}}{\frac{1}{\sqrt{2\pi \sigma _{0}^{2}}}{{e}^{\left( -\frac{y_{d}^{2}}{2\sigma _{0}^{2}} \right)}}}=\sqrt{\frac{\sigma _{0}^{2}}{\sigma _{1}^{2}}}{{e}^{\left( {{\left| {{y}_{d}} \right|}^{2}}\left( \frac{1}{2\sigma _{0}^{2}}-\frac{1}{2\sigma _{1}^{2}} \right) \right)}}. \\ 
\end{aligned}
\end{equation}
Let the energy detection threshold be denoted by $\Theta$. Taking the logarithm on both sides of the equation, we can obtain:
\begin{equation}
    \ln \left( \frac{{{\sigma }_{0}}}{{{\sigma }_{1}}} \right)+{{\left| {{y}_{d}} \right|}^{2}}\left( \frac{1}{2\sigma _{0}^{2}}-\frac{1}{2\sigma _{1}^{2}} \right)\gtrless _{{\mathcal{K}_{0}}}^{{\mathcal{K}_{1}}}\ln \left( \Theta  \right).
\end{equation}
Since $\delta_1^2 > \delta_0^2$, the coefficient of ${{\left| {{y}_{d}} \right|}^{2}}$ is positive, so the function is monotonically increasing with respect to ${{\left| {{y}_{d}} \right|}^{2}}$. Therefore, we can obtain: ${{\left| {{y}_{d}} \right|}^{2}}\succ {{\Theta }^{'}}$, ${{\Theta }^{'}}=\frac{2\ln \Theta -\ln \left( \frac{\sigma _{0}^{2}}{\sigma _{1}^{2}} \right)}{\frac{\sigma _{1}^{2}-\sigma _{0}^{2}}{\sigma _{0}^{2}\sigma _{1}^{2}}}.$\end{proof}

The total detection error rate of the eavesdropper in time slot $[t, \tau]$ consists of the false alarm probability $P_{\mathrm{FA}}[t, \tau]$ and the missed detection probability $P_{\mathrm{MD}}[t, \tau]$: $\xi ={{P}_{MD}[t, \tau]}+{{P}_{FA}[t, \tau]}$.
To ensure covert communication, the total detection error rate must satisfy $\xi \geq 1 - \varepsilon$, where $\varepsilon$ is a predefined, very small parameter representing the requirement for covertness.
However, directly computing $P_{\mathrm{FA}}$ and $P_{\mathrm{MD}}$ and incorporating them as optimization constraints is extremely difficult.
Therefore, we adopt an information-theoretic approach to transform this probabilistic constraint into a more tractable form in the following lemma.

\begin{lemma}
To satisfy the covertness requirement $\xi \ge 1-\varepsilon $, a stricter KL divergence constraint $D\left( {\mathcal{H}_{0}}||{\mathcal{H}_{1}} \right)$ can be imposed. This KL divergence can be precisely expressed in terms of the total received signal-to-noise ratio (SNR) at the eavesdropper as:
\begin{equation} \label{covert1}
    D\left( {\mathcal{H}_{0}}||{\mathcal{H}_{1}} \right)=\frac{1}{2}\left(\ln \left( 1+{{\gamma }_{d}}\left[ t,\tau  \right] \right)-\frac{{{\gamma }_{d}}\left[ t,\tau  \right]}{1+{{\gamma }_{d}}\left[ t,\tau  \right]} \right).
\end{equation}

\end{lemma}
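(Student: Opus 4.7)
The plan is to prove the lemma in two stages: first establish why a KL-divergence constraint is a legitimate (indeed, stricter) surrogate for the raw covertness requirement $\xi\ge 1-\varepsilon$, and then compute the KL divergence in closed form under the Gaussian signal model already specified.

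For the first stage, I would invoke the standard information-theoretic lower bound on the minimum detection error. Any test that the eavesdropper deploys satisfies $\xi\ge 1-V(\mathcal{H}_0,\mathcal{H}_1)$, where $V$ denotes the total variation distance between the two conditional distributions of $y_d$. Pinsker's inequality then gives $V(\mathcal{H}_0,\mathcal{H}_1)\le\sqrt{D(\mathcal{H}_0\|\mathcal{H}_1)/2}$. Chaining these two bounds shows that enforcing $D(\mathcal{H}_0\|\mathcal{H}_1)\le 2\varepsilon^{2}$ is sufficient to guarantee $\xi\ge 1-\varepsilon$. This justifies replacing the intractable probabilistic constraint on $P_{\mathrm{FA}}+P_{\mathrm{MD}}$ with a KL-divergence constraint; the fact that the substitution is stricter (sufficient but not necessary) is exactly the price paid for analytic tractability.

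For the second stage, I would identify the two distributions of $y_d$ explicitly from the signal model. Under $\mathcal{K}_0$ the eavesdropper sees only noise, so $y_d\sim\mathcal{N}(0,\sigma_0^{2})$ with $\sigma_0^{2}=N_d$. Under $\mathcal{K}_1$, since every $s_m\sim\mathcal{N}(0,1)$ is independent of the others and independent of $n_d\sim\mathcal{N}(0,N_d)$, the sum
\begin{equation}
y_d[t,\tau]=\sum_{m\in\mathcal{M}}G_m[t]\sqrt{P_m[t,\tau]\,g_{m,d}[t,\tau]}\,s_m+n_d
\end{equation}
is again zero-mean Gaussian, with variance $\sigma_1^{2}=N_d+\sum_{m}G_m^{2}P_m g_{m,d}=N_d\bigl(1+\gamma_d[t,\tau]\bigr)$ by the definition of $\gamma_d$ already given. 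Then I would plug these into the well-known closed form
\begin{equation}
D\bigl(\mathcal{N}(0,\sigma_0^{2})\,\|\,\mathcal{N}(0,\sigma_1^{2})\bigr)=\tfrac{1}{2}\Bigl(\ln\tfrac{\sigma_1^{2}}{\sigma_0^{2}}+\tfrac{\sigma_0^{2}}{\sigma_1^{2}}-1\Bigr),
\end{equation}
and use $\sigma_1^{2}/\sigma_0^{2}=1+\gamma_d$ together with $\sigma_0^{2}/\sigma_1^{2}-1=-\gamma_d/(1+\gamma_d)$ to collapse the expression to the claimed form in \eqref{covert1}.

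The main obstacle I anticipate is the first stage rather than the algebra: one has to be precise about the direction of the KL divergence (the bound uses $D(\mathcal{H}_0\|\mathcal{H}_1)$, matching the statement of the lemma) and about the fact that the chain $\xi\ge 1-V\ge 1-\sqrt{D/2}$ delivers a \emph{sufficient} condition, which is why the lemma says ``stricter.'' Once that is laid out cleanly, the Gaussian KL computation is routine because the noncoherent superposition of the $s_m$'s keeps the alternative distribution Gaussian, so no higher-order moments or mixture calculations are needed.
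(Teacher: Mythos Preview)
Your proposal is correct and follows essentially the same route as the paper: first reduce the covertness requirement to a KL-divergence bound via the total-variation/Pinsker chain, then compute $D(\mathcal{H}_0\|\mathcal{H}_1)$ for the two zero-mean Gaussians with variances $\sigma_0^2=N_d$ and $\sigma_1^2=N_d(1+\gamma_d)$. The only cosmetic difference is that the paper spells out the Gaussian KL integral step by step, whereas you invoke the closed form $\tfrac{1}{2}\bigl(\ln(\sigma_1^2/\sigma_0^2)+\sigma_0^2/\sigma_1^2-1\bigr)$ directly; both arrive at the same expression after substituting $\sigma_1^2/\sigma_0^2=1+\gamma_d$.
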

\begin{proof}
    The covertness constraint requires that the total detection error probability satisfies $\xi ={{P}_{MD}}+{{P}_{FA}} $. For the optimal eavesdropper, the error probability is related to the total variation distance $V(\mathcal{H}_0,\mathcal{H}_1)$:
\begin{equation}
    \xi =1-V\left( {\mathcal{H}_{0}},{\mathcal{H}_{1}} \right).
\end{equation}
Based on $\xi \ge 1-\varepsilon $, the covertness requirement can be equivalently expressed as:
$V\left( {\mathcal{H}_{0}},{\mathcal{H}_{1}} \right)\le \varepsilon .$ 
According to Pinsker's inequality in information theory, the total variation distance has the following upper bound in terms of the Kullback–Leibler divergence $D\left(\mathcal{H}_0||\mathcal{H}_1\right)$:
\begin{equation}
    V\left( {\mathcal{H}_{0}},{\mathcal{H}_{1}} \right)\le \sqrt{\frac{1}{2}D\left( {\mathcal{H}_{0}}||{\mathcal{H}_{1}} \right)}.
\end{equation}
Combining the above two inequalities, we can obtain a stronger constraint: $D\left( {\mathcal{H}_{0}}||{\mathcal{H}_{1}} \right)\le 2{{\varepsilon }^{2}}.$  
Since Pinsker’s inequality provides an upper bound, the condition $D\left( {\mathcal{H}_{0}}||{\mathcal{H}_{1}} \right)\le 2{{\varepsilon }^{2}}$ is more stringent than the condition $V\left( {\mathcal{H}_{0}},{\mathcal{H}_{1}} \right)\le \varepsilon$. Therefore, we adopt the KL divergence as the covertness constraint here. Next, we derive the explicit expression of the KL divergence $D\left( {\mathcal{H}_{0}}||{\mathcal{H}_{1}} \right)$.

According to the definition of the KL divergence, we have:
\begin{equation}
    D\left( {\mathcal{H}_{0}}||{\mathcal{H}_{1}} \right)=\int{p\left( {{y}_{d}}|{\mathcal{K}_{0}} \right)}\log \frac{p\left( {{y}_{d}}|{\mathcal{K}_{0}} \right)}{p\left( {{y}_{d}}|{\mathcal{K}_{1}} \right)}d{{y}_{d}}.
\end{equation}
Similarly, by substituting the PDFs of the received signal under hypotheses $\mathcal{K}_0$ and $\mathcal{K}_1$, we obtain:
\begin{equation}
    D\left( {\mathcal{H}_{0}}||{\mathcal{H}_{1}} \right)=\int_{-\infty }^{+\infty }{\frac{1}{\sqrt{2\pi \sigma _{0}^{2}}}{{e}^{-\frac{y_{d}^{2}}{2\sigma _{0}^{2}}}}}\ln \left( \frac{\frac{1}{\sqrt{2\pi \sigma _{0}^{2}}}{{e}^{-\frac{y_{d}^{2}}{2\sigma _{0}^{2}}}}}{\frac{1}{\sqrt{2\pi \sigma _{1}^{2}}}{{e}^{-\frac{y_{d}^{2}}{2\sigma _{1}^{2}}}}} \right)d{{y}_{d}}.
\end{equation}
\begin{figure*}[ht]
    \centering
    \begin{equation} \label{58}
    \begin{aligned}
   D\left( {\mathcal{H}_{0}}||{\mathcal{H}_{1}} \right)
  &=\int_{-\infty }^{+\infty }{p\left( {{y}_{d}}|{{H}_{0}} \right)}\left[ \frac{1}{2}\ln \left( \frac{\sigma _{1}^{2}}{\sigma _{0}^{2}} \right)+\frac{{{\left| {{y}_{d}} \right|}^{2}}}{2}\left( \frac{1}{\sigma _{1}^{2}}-\frac{1}{\sigma _{0}^{2}} \right) \right]d{{y}_{d}} \\ 
 &=\int{p\left( {{y}_{d}}|{\mathcal{H}_{0}} \right)}\left[ \frac{1}{2}\ln \left( \frac{\sigma _{1}^{2}}{\sigma _{0}^{2}} \right) \right]d{{y}_{d}}
 +\int{p\left( {{y}_{d}}|{\mathcal{H}_{0}} \right)}\left[ \frac{{{\left| {{y}_{d}} \right|}^{2}}}{2}\left( \frac{1}{\sigma _{1}^{2}}-\frac{1}{\sigma _{0}^{2}} \right) \right]d{{y}_{d}} \\ 
 &=\frac{1}{2}\ln \left( \frac{\sigma _{1}^{2}}{\sigma _{0}^{2}} \right)+\frac{1}{2}\left( \frac{\sigma _{1}^{2}}{\sigma _{0}^{2}}-1 \right) =\frac{1}{2}\left( \ln \left( \frac{\sigma _{1}^{2}}{\sigma _{0}^{2}} \right)+\frac{\sigma _{1}^{2}}{\sigma _{0}^{2}}-1 \right) \\ 
\end{aligned}
\end{equation}
    \noindent\rule{1\linewidth}{0.4pt}
\end{figure*}
\begin{figure*}[ht]
    \centering
    \begin{equation} \label{delta_1}
    \begin{aligned}
   \sigma _{1}^{2}
  &=E\left[ {{\left| {{y}_{d}} \right|}^{2}}|{\mathcal{K}_{1}} \right] =E\left[ {{\left| \sum\limits_{m\in \mathcal{M}}{{{G}_{m}}\left[ t \right]\sqrt{{{P}_{m}}\left[ t,\tau  \right]}{{h}_{m,d}}\left[ t,\tau  \right]}{{s}_{m}}\left[ t,\tau  \right]+{{n}_{d}}\left[ t,\tau  \right] \right|}^{2}} \right] \\ 
 &=\sum\limits_{m\in \mathcal{M}}{E\left[ {{\left| {{G}_{m}}\left[ t \right]\sqrt{{{P}_{m}}\left[ t,\tau  \right]}{{h}_{m,d}}\left[ t,\tau  \right]{{s}_{m}}\left[ t,\tau  \right] \right|}^{2}} \right]+E\left[ {{\left| {{n}_{d}} \right|}^{2}} \right]} \\ 
 &=\sum\limits_{m\in \mathcal{M}}{{{G}_{m}}{{\left[ t \right]}^{2}}{{P}_{m}}\left[ t,\tau  \right]}E\left[ {{\left| {{h}_{m,d}}\left[ t,\tau  \right] \right|}^{2}} \right]E\left[ {{\left| {{s}_{m}} \right|}^{2}} \right]+{{N}_{d}} = \sum\limits_{m\in \mathcal{M}}{\frac{{{G}_{m}}\left[ t \right]{{P}_{m}}\left[ t,\tau  \right]}{{{A}_{m,d}}\left[ t,\tau  \right]}} +{{N}_{d}}. \\ 
\end{aligned}
\end{equation}
    \noindent\rule{1\linewidth}{0.4pt}
\end{figure*}
To evaluate the integral, we first simplify the logarithmic term and then split the expression into two parts to compute the integrals separately, as shown in (\ref{58}).

The integral term $\int p\left( y_d \mid K_0 \right) \left| y_d \right|^2 \, dy_d$ is the second-order moment of the random variable $y_d$ under the distribution $\mathcal{K}_0$, which corresponds to its variance $\sigma_0^2$.
Therefore, $\int p\left( y_d \mid K_0 \right) \left| y_d \right|^2 \, dy_d = \sigma_0^2.$
Next, we need to specify the variances of the received signal under hypotheses $\mathcal{K}_0$ and $\mathcal{K}_1$, as well as the relationship between them. 

Under hypothesis $\mathcal{K}_0$, the eavesdropper only receives environmental noise, and thus the variance is equal to the noise power $N_d$: $\delta_0^2 = N_d$.
Under hypothesis $\mathcal{K}_1$, the signal $y_d$ received by the eavesdropper is the superposition of the signals transmitted by all participating AUVs and the noise. Since all transmitted signals $s_m$, channel gain $g_{m,d}$, and noise $n_d$ are zero-mean and mutually independent random variables, the received signal $y_d$ also has zero mean. Its variance $\sigma_1^2$, which represents the total average power of the signal, is given by (\ref{delta_1}).

Thus, we have a relationship: $\frac{\sigma _{1}^{2}}{\sigma _{0}^{2}}={{\gamma }_{d}}\left[ t,\tau  \right]+1$.
By substituting this relationship into equation (\ref{58}), we obtain the final covertness constraint (\ref{covert1}).
\end{proof}

\subsection{Task Model}
The underwater collaborative exploration mission scenario considered in this paper takes the central AUV as the core coordination node, supplemented by multiple AUVs with certain autonomous capabilities for joint execution. The central AUV plays a crucial role in communicating with the upper-level command system (shore-based station) and obtaining macro task instructions.

Specifically, the AUV rises to a position close to the sea surface and establishes a reliable wireless communication link with the BS. Next, the AUV will receive the task packets sent from the BS. This task typically includes core information: the central coordinates of the area to be detected, as well as its length and width. After receiving the task, the central AUV will dive to the predetermined operational depth. Subsequently, the central AUV uses underwater acoustic communication technology to distribute instructions to the AUVs involved in the task. At this stage, the central AUV is not only a "relay" of information, but also, based on the initial perception of the overall situation (such as the locations and remaining energy of each AUV), delegates the appropriate AUV to participate in this task.

We divide the execution of the task into three stages: the task distribution stage, the AUV task execution stage, and the result fusion stage. Among these, the task execution stage is a complex process, comprising a series of sub-activities, such as relocating the assigned AUV from its current position to the target area, performing detection operations using its sensors, and conducting preliminary on-board processing of the collected data.

\subsubsection{Task distribution stage} This stage of the central AUV marks the beginning of the entire collaborative task. Its core lies in that the central AUV effectively transmits the task instructions obtained from the BS to the selected AUV cluster. Specifically, the delay in the task distribution stage is:
\begin{equation}
    T_m^d[t]=\frac{D[t]}{R_{m,cAUV}[t]}+\frac{v_s}{d_{m,cAUV}[t]},
\end{equation}
where $D[t]$ represents the amount of task instruction data issued by central AUV in the macro time slot $t$. $R_{m,cAUV}[t]$ is defined as the data transmission rate between the central AUV and the $m$th AUV within the first micro time slot $\tau$ of the macro time slot $t$. $d_{m,cAUV}[t]$ represents the distance between AUV $m$ and central AUV at the beginning of this macro time slot. Here, $v_s = 1500m/s$ denotes the propagation speed of acoustic waves in water \cite{v_s}. 

\subsubsection{Task execution stage} The first step is to plan the arrival point of each AUV participating in the task within the target area to avoid repetitive exploration. The target area is a rectangular area to be explored composed of the central coordinate, length and width: $\operatorname{tar}[t]=\left(x_{tar}[t], y_{tar}[t], z_{tar}[t]\right), l[t], w[t] .$ Each AUV $m$ has a circular detection radius $r_m$ determined by its own sensor performance and computing ability $C_m$. The exploration radius $r_m$ is linearly correlated with the computing ability and is given by
\begin{equation}
    r_m=r_b+\mu \ln \left(1+\frac{C_m}{C}\right),
\end{equation}
where $r_b$ indicates the basic detection radius, which is determined by the physical performance of its sensor. $\mu$ and $C$ are the coefficient and the reference computing ability threshold. Consequently, we can define the task completion rate as
\begin{equation}
    \zeta[t]=\frac{\sum_{m \in M}G_m[t] \pi r_m^2}{L[t]^2}.
\end{equation}

Then, we need to calculate the total delay experienced by the AUVs in executing the tasks. This delay primarily consists of the time required for a series of actions performed by AUVs, including movement, sensing, and processing. The movement time of the AUVs is significantly influenced by the dynamic characteristics of the complex ocean environment. Therefore, we have developed a dynamic model that accurately represents the actual movement speed of the AUVs under the influence of ocean currents.

Specifically, we utilize a parameterized model based on the superposition of multiple Lamb-Oseen vortices to simulate representative features of mesoscale to small-scale ocean turbulence in three-dimensional space \cite{ocean3D}. This model is combined with a simplified form of the Navier-Stokes equations or their numerical solutions. The horizontal velocity components are recursively updated through the vortex model, while the vertical component is generated based on the horizontal velocities and the covariance matrix. The specific ${{\overset{\to }{\mathop{V}}\,}_{T}}$ is modeled as follows:
\begin{equation}
    \left\{ \begin{aligned}
  & \frac{\partial \varpi }{\partial t}+\left( {{\overset{\to }{\mathop{V}}_{c}}}\nabla  \right)\varpi =h\Delta \varpi , \\ 
 & \varpi \left( \overset{\to }{\mathop{r}}\, \right)=\frac{\beta }{\pi {{l}^{2}}}{{e}^{\frac{-{{\left( \overset{\to }{\mathop{r}}\,-{{\overset{\to }{\mathop{r}_{c}}}} \right)}^{2}}}{{{l}^{2}}}}}, \\ 
 & v_{x}^{C}\left( \overset{\to }{\mathop{r}}\, \right)=-\delta \frac{y-{{y}_{c}}}{2\pi {{\left( \overset{\to }{\mathop{r}}\,-{{\overset{\to }{\mathop{r}_{c}}}} \right)}^{2}}}\left[ 1-{{e}^{\frac{-{{\left( \overset{\to }{\mathop{r}}\,-{{\overset{\to }{\mathop{r}_{c}}}} \right)}^{2}}}{{{l}^{2}}}}} \right], \\ 
 & v_{y}^{C}\left( \overset{\to }{\mathop{r}}\, \right)=-\delta \frac{x-{{x}_{c}}}{2\pi {{\left( \overset{\to }{\mathop{r}}\,-{{\overset{\to }{\mathop{r}_{c}}}} \right)}^{2}}}\left[ 1-{{e}^{\frac{-{{\left( \overset{\to }{\mathop{r}}\,-{{\overset{\to }{\mathop{r}_{c}}}} \right)}^{2}}}{{{l}^{2}}}}} \right], \\ 
 & v_{z}^{C}\left( \overset{\to }{\mathop{r}}\, \right)\!=\!\rho \delta \frac{1}{\sqrt{\det \left( 2\pi \Gamma \right)}}{{e}^{\frac{-{{\left( \overset{\to }{\mathop{r}}\,-{{\overset{\to }{\mathop{r}_{c}}}} \right)}^{T}}}{2\left( \overset{\to }{\mathop{r}}\,-{{\overset{\to }{\mathop{r}_{c}}}} \right)}}},\Gamma\!=\!\left[ \begin{matrix}
   l & 0  \\
   0 & l  \\
\end{matrix} \right]. \\ 
\end{aligned} \right.
\end{equation}

The entire ocean current velocity field is composed of three components: $(v_x^C, v_y^C, v_z^C)$. The horizontal velocity components $(v_x^C, v_y^C)$ are computed based on a two-dimensional vortex model, whose behavior is governed by the spatiotemporal evolution of fluid vorticity $\varpi$. This evolution is jointly influenced by the advection effect of the background current ${{\overset{\to }{\mathop{V}}_{c}}}$ and the dissipative effect described by the fluid viscosity coefficient $h$. In this model, the flow velocity at an arbitrary position ${\overset{\to }r}$ is a complex function of the vortex core ${{\overset{\to }{\mathop{r}_{c}}}}$, characteristic radius $l$, vortex strength $\beta$, and circulation coefficient $\delta$. Considering that vertical motion in ocean environments is typically mild, the vertical velocity component $v_z^C$ is modeled as being driven by the horizontal flow field and coupled via a vertical flow conversion factor $\rho$. Its spatial distribution is described by a Gaussian function with $\Gamma$ as the spatial correlation matrix, ensuring continuity and physical plausibility of the flow field. This integrated model allows us to generate a dynamically evolving 3D current field for simulating the underwater motion of AUVs.
Thus, the velocity of AUV $m$ relative to the ocean current is given by:
\begin{equation}
    \overset{\to }{\mathop{V}}_{m}^{'}\left[ t,\tau  \right]={{\overset{\to }{\mathop{V}}_{m}}}\left[ t,\tau  \right]-{{\overset{\to }{\mathop{V}}_{T}}}.
\end{equation}
Here, ${{\overset{\to }{\mathop{V}}_{m}}}\left[ t,\tau  \right]$ is the velocity generated by the AUV's own propulsion system. 

Considering the large target area and the limited energy and task time of the AUVs, directly directing the AUVs toward the center of the area or employing random wandering for exploration is inefficient. Additionally, the circular detection range of the AUVs does not geometrically align with the rectangular target area, which can lead to unnecessary redundant detections. Therefore, we have developed a greedy strategy to allocate precise arrival positions for each AUV participating in the task.

The exploration area is rectangular, with a length of $l[t]$ and a width of $d[t]$, and $L[t] = l[t]*d[t]$. For AUV $m$, with an exploration radius of $r_m$, its coordinates must satisfy the following constraints: $\forall \left( {{x}_{m}},{{y}_{m}} \right)\in S,{{x}_{m}}\in \left[ {{r}_{m}},l\left[ t \right]-{{r}_{m}} \right],{{y}_{m}}\in \left[ {{r}_{m}},w\left[ t \right]-{{r}_{m}} \right].$

First, the AUVs are sorted based on their exploration radius $r_m$, giving priority to AUVs with larger radius for position allocation in order to reduce the probability of subsequent conflicts. Candidate points are generated using a normal distribution, with the mean located at the center of the rectangle and a standard deviation of $\mu +3\sigma$ to ensure central concentration. This approach aims to position the centers of the circles as close as possible to the center of the area to enhance effective coverage: ${{x}_{m}},{{y}_{m}}\sim N\left( \frac{L}{2},{{\left( \frac{L}{6} \right)}^{2}} \right).$
Whenever a new position is sought, collision detection with the already determined position is required. Among them, the condition for no repeated coverage is that the distance between the centers of the circles is greater than the sum of the radius: $\sqrt{{{\left( {{x}_{i}}-{{x}_{j}} \right)}^{2}}+{{\left( {{y}_{i}}-{{y}_{j}} \right)}^{2}}}\ge {{r}_{i}}+{{r}_{j}}.$ Finally, the greedy algorithm finds the target position for all AUVs.
Then the delay for the AUV to move to its target position is noted as:
\begin{equation}
    \mathbb{I}_m \left[t,\tau\right]=\mathbb{I}\left( \left\| {{\overset{\to }{\mathop{r}}\,}_{m}}\left[ t,\tau  \right]-{{\overset{\to }{\mathop{r}}\,}_{sub,m}}\left[ t \right] \right\|\le {{r}_{m}} \right),
\end{equation}
\begin{equation}
    \tau _{m}^{idx}\left[ t \right]=\min \left\{ \tau _{m}^{idx}\left[ t \right]\in \left\{ 1,..,micro\text{ }steps \right\}| \mathbb{I}_m \left[t,\tau\right]\right\},
\end{equation}
\begin{equation}
    T_{m}^{move}\left[ t \right]=\tau _{m}^{idx}\left[ t \right] \Delta \tau.
\end{equation}

Here, $\tau _{m}^{\text{idx}}[t]$ denotes the index of the first micro time slot in which AUV $m$ meets the arrival condition within the macro time slot $t$, and ${{\vec{r}}_{\text{sub},m}}[t]$ represents the sub-target location assigned to AUV $m$.
When the AUV reaches the target point, it will conduct detection operations in the surrounding area.  As mentioned above, each AUV has an effective circular detection radius $r_m$ determined by its sensor characteristics and on-board processing capabilities. When the AUV reaches the detection position, it will cover the circular area through sensor scanning. The beam angle of the AUV sonar is $\theta$, so the length of the chord covered in one complete rotation is $\varpi =2{{r}_{m}} \sin \left( \theta /2 \right)$, and the time for completing one rotation is given by
\begin{equation}
    T_{m}^{e}=\frac{2\pi }{\varpi }.
\end{equation}

\subsubsection{Result fusion stage} After the AUV completes the detection operation, it must upload the detection result data collected during this macro task to the central AUV for aggregation and processing. Specifically, the data that needs to be uploaded is:
\begin{equation}
    D_{m}^{'}\left[ t \right]=\varphi \left[ t \right] \pi {{r}_{m}}^{2},
\end{equation}
where $\varphi \left[ t \right]$ represents the amount of data collected in a single sample. As a result, the upload delay is:
\begin{equation}
    T_{m}^{up}\left[ t \right]=\frac{D_{m}^{'}\left[ t \right]}{{{R}_{m,cAUV}}\left[ t,\tau  \right]}+\frac{v_s}{d_{m,cAUV}[t,\tau]}
\end{equation}

Finally, the total execution delay required for the entire collaborative AUV team to complete this macro task $t$ depends on the time consumed by the AUV that completes its work: 
\begin{equation}
    {{T}_{task}}\left[ t \right]=\underset{m\in M}{\mathop{\max }}\,\left( T_{m}^{d}\left[ t \right]+T_{m}^{move}\left[ t \right]+T_m^e+T_{m}^{up}\left[ t \right] \right).
\end{equation}
\subsection{Energy Consumption Model}
In resource-constrained underwater environments, energy is the core factor that limits the continuous operational capacity and task endurance range of AUVs. Therefore, this subsection develops a comprehensive energy consumption model aimed at quantifying the energy expenditure incurred by AUVs during task activities. Each AUV $m$ starts with an initial energy reserve denoted as $E_{m}^{init}$ at the beginning of the macro task cycle. During a task execution cycle, the total energy consumption of the AUV consists of the following three main components.

\subsubsection{Mobility energy consumption}
This portion of energy consumption is related to the energy expended by the AUV to overcome current resistance and perform positional transfers in a three-dimensional underwater environment. It primarily consists of three components: horizontal movement energy consumption, vertical movement energy consumption, and fluid drag energy consumption, among which the horizontal movement energy consumption is given by equation (\ref{E_m}).
\begin{figure*}[ht]
    \centering
    \begin{equation}    \label{E_m}
        E_{h}^{m}\left[ t,\tau  \right]=\frac{{{G}^{2}}\Delta\tau }{\sqrt{2}{{\rho }_{l}}A}{{\left( \left( r_{x}^{m}{{\left[ t,\tau  \right]}^{2}}+r_{y}^{m}{{\left[ t,\tau  \right]}^{2}} \right)+{{\left( r_{x}^{m}{{\left[ t,\tau  \right]}^{2}}+r_{y}^{m}{{\left[ t,\tau  \right]}^{2}} \right)}^{2}}+\frac{{{G}^{2}}}{\rho _{l}^{2}{{A}^{2}}} \right)}^{-\frac{1}{2}}}
    \end{equation}
    \hrulefill
\end{figure*}
$G$ represents the weight of the AUV, $\Delta\tau$ is the duration of micro time slots, $A$ is the cross-sectional area in the direction of movement, and $\rho_l$ is the mass density of seawater. The energy consumed by the AUV during descent in the $\tau$-th time slot is:
\begin{equation}
    E_{d}^{m}\left[ t,\tau  \right]=Gv_{z}^{m}\left[ t,\tau  \right]\Delta\tau .
\end{equation}
By applying computational fluid dynamics techniques, the fluid drag experienced by the AUV can be expressed as:
\begin{equation}
    F_{d}^{m}\left[ t,\tau  \right]=\frac{1}{2}{{\rho }_{l}}A{{C}_{d}}\left\| \overset{\to }{\mathop{V}}\,_{m}^{'}\left[ t,\tau  \right] \right\|_{2}^{2},
\end{equation}
where $C_d$ denotes the drag coefficient. Based on this, the energy consumption due to fluid drag in the $\tau$-th time slot can be modeled as:
\begin{equation}
    \begin{aligned}
   E_{f}^{m}\left[ t,\tau  \right]&=F_{d}^{m}\left[ t,\tau  \right]\left\| \overset{\to }{\mathop{V}}\,_{m}^{'}\left[ t,\tau  \right] \right\|_{2}^{2} \Delta\tau  \\ 
 & \text{      =}\frac{1}{2}{{\rho }_{l}}A{{C}_{d}}\Delta\tau \left\| \overset{\to }{\mathop{V}}\,_{m}^{'}\left[ t,\tau  \right] \right\|_{2}^{3}. \\ 
\end{aligned}
\end{equation}

\begin{figure*}[ht]
    \centering
    \begin{equation}
        \begin{aligned}     \label{E_m_sum}
    & E_{m}^{move}\left[ t,\tau  \right]=E_{h}^{m}\left[ t,\tau  \right]+E_{d}^{m}\left[ t,\tau  \right]+E_{f}^{m}\left[ t,\tau  \right] \\ 
     & \text{         =}\frac{\frac{{{G}^{2}}\Delta\tau }{\sqrt{2}{{\rho }_{l}}A}}{\sqrt{\left( r_{x}^{m}{{\left[ t,\tau  \right]}^{2}}+r_{y}^{m}{{\left[ t,\tau  \right]}^{2}} \right)+{{\left( r_{x}^{m}{{\left[ t,\tau  \right]}^{2}}+r_{y}^{m}{{\left[ t,\tau  \right]}^{2}} \right)}^{2}}+\frac{{{G}^{2}}}{\rho _{l}^{2}{{A}^{2}}}}}+Gv_{z}^{m}\left[ t,\tau  \right]\Delta\tau +\frac{1}{2}{{\rho }_{l}}A{{C}_{d}}\Delta\tau \left\| \overset{\to }{\mathop{V}}\,_{m}^{'}\left[ t,\tau  \right] \right\|_{2}^{3} \\ 
    \end{aligned}
    \end{equation}
    \noindent\rule{1\linewidth}{0.4pt}
\end{figure*}
Therefore, the total energy consumption for the movement of AUV $m$ is given by (\ref{E_m_sum}).

\subsubsection{Detection energy consumption}
When the AUV performs its core exploration tasks, the onboard sensor array and related signal processing units consume significant amounts of energy. We refer to this portion of energy consumption as exploration energy consumption, which is related to the area that the AUV needs to cover for exploration and the inherent energy efficiency of the sensors. We model it as $
    E_{m}^{\det }\left[ t \right]={{\varepsilon }_{\det }} \pi r_{m}^{2}$,
where ${\varepsilon }_{\det }$ represents the energy coefficient for detecting a unit area, indicating the average energy required to perform detection per unit area (in square meters).

\subsubsection{Data upload energy consumption}
 The electro-acoustic conversion efficiency of the modulator-demodulator typically ranges from 20\% to 70\% \cite{efficiency}, so the electrical power required at the transmitter to generate a sound wave is given by
\begin{equation}
    p_{m}^{e}\left[ t \right]=\frac{{{p}_{m}}\left[ t,\tau  \right]}{{{\eta }_{e}}}.
\end{equation}
The transmission time is primarily determined by the amount of data to be transmitted, $D_m^{'}[t]$, and the data rate:
\begin{equation}
    t_{m}^{trans}\left[ t \right]=\frac{D_{m}^{'}\left[ t \right]}{{{R}_{m,cAUV}}\left[ t,\tau  \right]}.
\end{equation}
Therefore, the energy consumption for data upload is:
\begin{equation}
    E_{m}^{trans}\left[ t \right]=p_{m}^{e}\left[ t \right] t_{m}^{trans}\left[ t \right].
\end{equation}
The sum energy consumption of AUV $m$ is calculated as
\begin{equation}
    {{E}_{m}}\left[ t \right]=E_{m}^{init}-E_{m}^{move}\left[ t \right]-\sum\limits_{i\in \tau }{E_{m}^{det}\left[ t,i \right]}-E_{m}^{trans}\left[ t \right].
\end{equation}

\subsection{Problem Formulation}
 We formulate an optimization problem to maximize the overall collaboration efficiency of the system while ensuring the covertness of the AUV team, i.e.,
\begin{align}
  & O{{P}_{1}}:\underset{P\left[ t,\tau  \right],V\left[ t,\tau  \right],G\left[ t \right]}{\mathop{\max }}\,\text{ }\eta[t] , \label{origin_problem}\\ 
 & s.t.\;\;\;\text{    }D\left( {\mathcal{H}_{0}}||{\mathcal{H}_{1}} \right)\left[ t,\tau  \right]\le 2{{\varepsilon }^{2}},\tag{\ref{origin_problem}a} \\ 
 & \;\;\;\;\;\;\;\;\text{        }{{P}_{\min }}\le {{P}_{m}}\left[ t,\tau  \right]\le {{P}_{\max }},\forall m\in M, \tag{\ref{origin_problem}b}\\ 
 & \;\;\;\;\;\;\;\;\text{        }{{G}_{m}}\left[ t \right]\in \left\{ 0,1 \right\},\forall m\in M,\tag{\ref{origin_problem}c} \\ 
 & \;\;\;\;\;\;\;\;\text{        }{{\left\| {{\overset{\to }{\mathop{V}}\,}_{m}}\left[ t,\tau  \right]-{{\overset{\to }{\mathop{V}}\,}_{m}}\left[ t,\tau -1 \right] \right\|}_{2}}\le \alpha i,\forall m\in M, \tag{\ref{origin_problem}d}\\ 
 & \;\;\;\;\;\;\;\;\text{        }{{E}_{m}}\left[ t \right]\ge 0,\text{        }\forall m\in M, \tag{\ref{origin_problem}e}
\end{align}
where $
    \eta \left[ t \right]=\frac{\zeta \left[ t \right]}{{{T}_{task}}\left[ t \right]}
$ indicates the collaboration efficiency of the AUV.

To ensure the AUV team is not detected by potential eavesdroppers while executing the task, the system's communication behavior must satisfy the covert constraint (\ref{origin_problem}a). Constraint (\ref{origin_problem}b) and (\ref{origin_problem}d) address limitations during task execution, specifically the constraints on transmission power and the AUVs' mobility abilities, which are subject to the performance limitations of the propulsion system. Constraint (\ref{origin_problem}c) defines the decision variable constraints, indicating which AUVs the central AUV will activate to participate in the task. Finally, constraint (\ref{origin_problem}e) represents the energy constraint for the AUVs, ensuring that the cumulative energy consumption during task execution does not exceed the initial energy reserves.

This scenario presents a long-sequence decision-making problem, for which we consider modeling it as an MDP and solving it using deep reinforcement learning (DRL) methods. However, given that the optimization variables $G$ and $P$,$V$ correspond to action spaces on different scales (where $G$ only requires a single decision at the beginning of the task, while $P$,$V$ necessitate multiple continuous decisions throughout the entire task execution). The traditional DRL algorithm is not applicable in this scenario. Therefore, we plan to develop a two-scale multi-agent reinforcement learning to solve this problem.

\section{The Proposed HMAPPO Framework 
}\label{method}
In response to the underwater multi-AUV covert collaboration optimization problem established in the previous section, its dual-scale decision characteristics and dynamic collaboration among multiple AUVs make it challenging for traditional DRL methods to directly address. As illustrated in Fig. \ref{framework}, we propose a framework that decomposes the overall optimization problem into two interrelated but time-scale sub-problems: the macro decision layer, where the central AUV assigns tasks, and the micro layer, where individual AUVs perform real-time power and trajectory coordination to meet covertness constraints. 

\subsection{MDP Formulation}
In this subsection, we will model the MDP for the two distinct levels of subproblems: macro-level decision-making and micro-level decision-making. Considering the significant differences between these two levels in terms of the availability of decision information, time scales for action execution, and optimization objectives, we will construct separate MDP and POMDP models for each.

\begin{figure}
    \centering
    	\captionsetup{font={small}} 
    \includegraphics[width=1.05\linewidth]{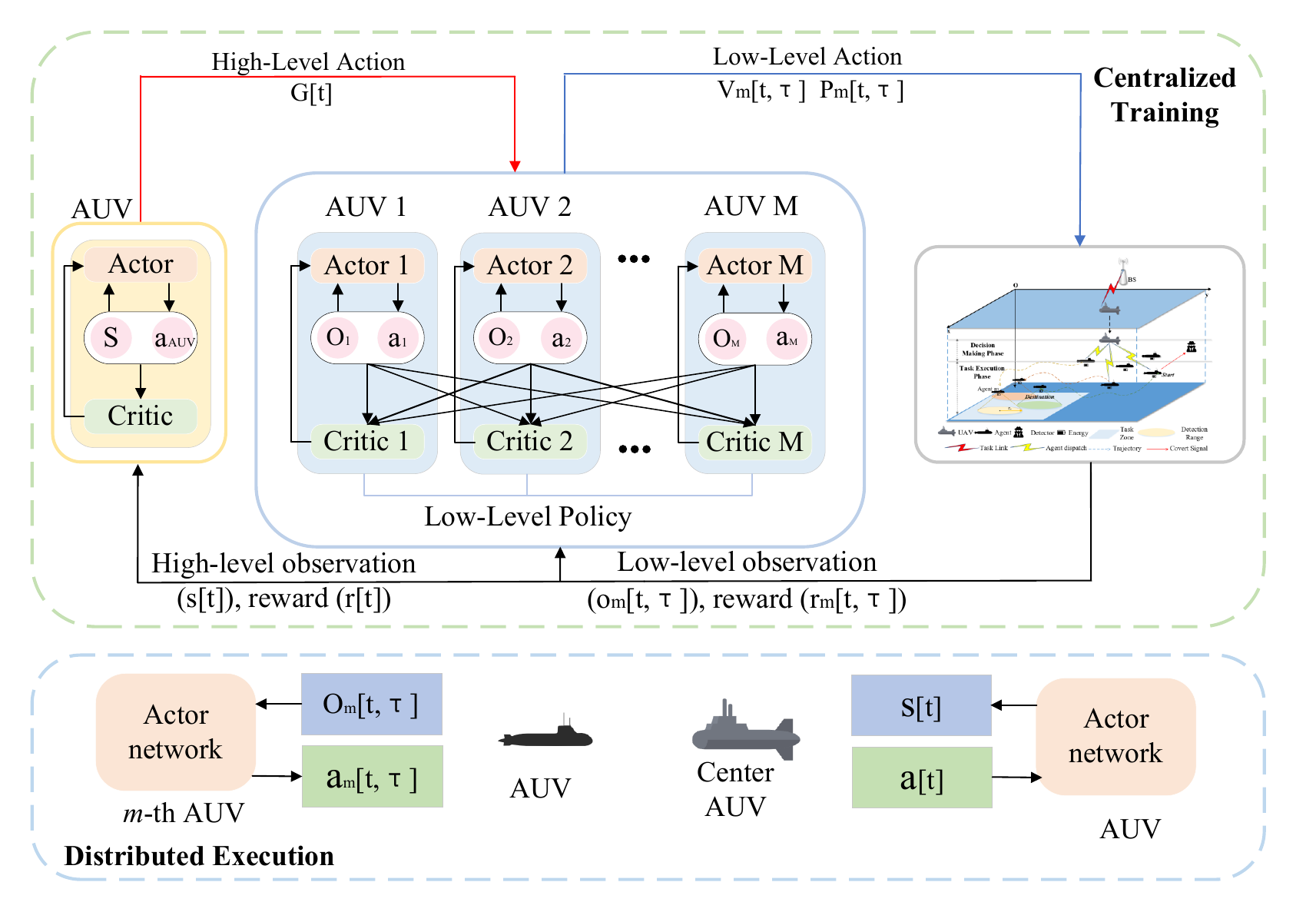}
    \caption{The framework of the proposed HMAPPO.}
    \label{framework}
\end{figure}

\subsubsection{Macro-level decision-making}
At the macro level of this framework, the central AUV takes on the role of assigning tasks to the AUVs. Its goal is to evaluate the overall system state at the beginning of each macro task cycle and activate the optimal AUVs to maximize the long-term cumulative macro collaboration efficiency. The central AUV can obtain global aggregated information regarding the positions and energy of the AUVs; therefore, we model its decision-making process as an MDP:
\begin{itemize}
    \item \textit{State}: The state space includes the position of the AUV at the beginning of the current task, as well as its remaining energy, i.e.,
    ${{S}_{\rm macro}}\!\triangleq\! \left\{ s_{t,\tau }^{\rm macro} \right\}\!=\!\left\{ \left[ {{\overset{\to }{\mathop{r}}\,}_{m}}\left[ t \right],{{E}_{m}}\left[ t \right] \right] \right\},\forall m\in M.$
    \item \textit{Action}: The action space is defined as a binary decision vector, representing whether the central AUV decides to activate the AUV to participate in the current macro-level task: ${{A}_{AUV}}\triangleq \left\{ \left[ {{G}_{m}}\left[ t \right] \right] \right\}.$
    \item \textit{Reward}: After the completion of the current macro task, our reward function takes into account the task completion rate, task execution time delay, and the average reward per micro timeslot. The goal is to maximize long-term cumulative collaborative efficiency while ensuring the system's sustainability, i.e., ${{R}_{\rm macro}}={{\xi }_{1}}\cdot \zeta \left[ t \right]+{{\xi }_{2}} \cdot {{T}_{task}}\left[ t \right]+{{\xi }_{3}} \cdot \sum\limits_{t\in T}{{{R}_{\rm micro}}\left[ t \right]}/T.$
\end{itemize}

\subsubsection{Micro-level decision-making}
After the central AUV determines the AUVs participating in the current macro task, each participating AUV transitions to the micro execution layer. At this level, the AUVs need to perform real-time control of their transmission power and navigation speed (trajectory) within continuous micro time slots $\tau$, based on their limited local observation information. Since a single AUV cannot obtain the complete global state, its decision-making process is modeled as a partially observable Markov decision process (POMDP):
\begin{itemize}
    \item \textit{State}: The state space includes the position of AUV, current velocity, noise power and remaining energy: ${{S}_{\rm micro}}\!\triangleq\! \left\{ s_{t,\tau }^{m} \right\}_{m=1}^{M}\!=\!\left\{ \left[ {{\overset{\to }{\mathop{r}}\,}_{m}}\left[ t,\tau  \right],\overset{\to }{\mathop{V}}\,\left[ t,\tau  \right],{{n}_{m}}\left[ t,\tau  \right],{{E}_{m}}\left[ t,\tau  \right] \right] \right\}.$
    \item \textit{Action}: The action space is defined as:
    ${{A}_{\rm micro}}\triangleq \left\{ a_{t,\tau }^{m} \right\}_{m=1}^{M}=\left\{ \left[ {{P}_{m}}\left[ t,\tau  \right],{{V}_{m}}\left[ t,\tau  \right] \right] \right\}.$
    \item \textit{Observation}: Each AUV can only make decisions based on its local observations, which is defined as: ${{O}_{\rm micro}}\triangleq \left\{ o_{t,\tau }^{1},o_{t,\tau }^{2},...,o_{t,\tau }^{M} \right\},$ The observation of AUV $m$ at time slot $\tau$ is described as: 
    \begin{equation}
        o_{t,\tau }^{m}=\left\{ \begin{aligned}
      & {{d}_{m,d}}\left[ t,\tau  \right],{{d}_{m,cAUV}}\left[ t,\tau  \right],{{d}_{m,sub}}\left[ t,\tau  \right] \\ 
     & \overset{\to }{\mathop{{{r}_{m}}}}\,\left[ t,\tau  \right] \\ 
     & \overset{\to }{\mathop{{{V}_{m}}}}\,\left[ t,\tau  \right], \\ 
     & {{G}_{m}}\left[ t \right], \\ 
     & {{E}_{m}}\left[ t,\tau  \right] \\ 
    \end{aligned} \right\}.
    \end{equation}
    \item \textit{Reward}: The micro-level reward design takes into account covertness, task guidance, and energy consumption, aiming to guide the AUV to learn an efficient and economical strategy for completing local tasks while satisfying the covertness constraints.
    The covertness constraint reward is as follows:
    \begin{equation}
        {{r}_{c}}\left[ t,\tau  \right]=\left\{ \begin{aligned}
      & 1,\text{    }D\left( {\mathcal{H}_{0}}||{\mathcal{H}_{1}} \right)\left[ t,\tau  \right]\le 2{{\varepsilon }^{2}}, \\ 
     &-1, \text{   }D\left( {\mathcal{H}_{0}}||{\mathcal{H}_{1}} \right)\left[ t,\tau  \right]\succ 2{{\varepsilon }^{2}}. \\ 
    \end{aligned} \right.
    \end{equation}

    The task completion reward is a sparse but high-value one-time reward, aimed at incentivizing the AUV to complete its core task, which is to reach the sub-target area.
    \begin{equation}
        r_{m}^{\rm task}\left[ t,\tau  \right]=\left\{ \begin{aligned}
      & {{\varpi }_{b}},\text{ }{{d}_{m,sub}}\left[ t,\tau  \right]\prec {{r}_{m}}, \\ 
     & 0,\ \ \ \text{otherwise}. \\ 
    \end{aligned} \right.
    \end{equation}

    Here, when the AUV's distance to the final target is less than its exploration radius, it is considered to have reached the sub-target. Since the reward is sparse, the AUV may lack sufficient guidance signals during the task execution. Therefore, we further design a target-guided reward, which provides denser learning signals and continuously encourages the AUV to approach the sub-target:
    \begin{equation}
        r_{m}^{\rm target}\left[ t,\tau  \right]=\left\{ \begin{aligned}
      & {{\chi }_{p}} \Delta d\left[ t,\tau  \right],\text{    }\Delta d\succ 0, \\ 
     & -{{\chi }_{r}} \Delta d\left[ t,\tau  \right],\text{  }\Delta d\succ 0, \\ 
     & 0,\text{                     }if\text{ }\Delta d=0. \\ 
    \end{aligned} \right.
    \end{equation}

    Here, $\Delta d\left[ t,\tau  \right]={{d}_{m,sub}}\left[ t,\tau -1 \right]-{{d}_{m,sub}}\left[ t,\tau  \right]$ represents the distance change between the current micro time-slot and the previous micro time-slot with with respect to the sub-target.

    When the AUV's energy consumption falls below zero, a penalty proportional to energy consumption falls below zero, a penalty proportional to the energy deficit is applied, aiming to guide the agent to conserve energy: \begin{equation}
        {{r}_{e}}\left[ t,\tau  \right]=\sum\limits_{m\in M}{ReLU\left( -{{E}_{m}}\left[ t,\tau  \right] \right)}.
    \end{equation}

 The sum micro reward is modeled as:
 \begin{align}
      {{R}_{\rm micro}}=&{{\varphi }_{1}}\cdot {{r}_{c}}\left[ t,\tau  \right]+{{\varphi }_{2}}\cdot \sum\limits_{m\in M}{{{G}_{m}} r_{m}^{\rm task}}\left[ t,\tau  \right] \\ 
     & +{{\varphi }_{3}}\cdot \sum\limits_{m\in M}{{{G}_{m}} r_{m}^{\rm target}}\left[ t,\tau  \right]+{{\varphi }_{4}} \cdot{{r}_{e}}\left[ t,\tau  \right].\nonumber  
 \end{align}
\end{itemize}
\subsection{PPO Solution}
To effectively solve the MDP and POMDP modeled in the previous subsection, we employ deep reinforcement learning algorithms based on Proximal Policy Optimization (PPO) and MAPPO. Our proposed hierarchical framework, HMAPPO, leverages these algorithms to address the distinct decision-making challenges at both the macro and micro levels.

\subsubsection{The mechanism of PPO and MAPPO}
At the macro level, the goal of the central AUV is to learn an optimal policy ${{\pi }_{{{\theta }_{cAUV}}}}\left( {{A}_{AUV}}|{{S}_{macro}} \right)$ in order to select the best AUV delegation scheme based on the global state observation $S_{macro}$ to maximize the long-term cumulative macro reward. As an advanced policy gradient method, PPO introduces a clipped surrogate objective function to restrict the magnitude of policy updates, thereby improving training stability while ensuring learning efficiency. 
The objective function of PPO is defined as:
\begin{equation} \label{lclicp}
\begin{aligned}
    &{{L}^{CLIP}}\left( \theta  \right)=\\&{{\mathbb{E}}_{t}}\left[ \min \left( \frac{{{\pi }_{\theta }}\left( a|s \right)}{{{\pi }_{{{\theta }_{old}}}}\left( a|s \right)}{{A}_{t}},clip\left( \frac{{{\pi }_{\theta }}}{{{\pi }_{{{\theta }_{old}}}}},1\!-\!\epsilon ,1\!+\!\epsilon  \right){{A}_{t}} \right) \right].
\end{aligned}
\end{equation}
where, $\frac{{{\pi }_{\theta }}\left( a|s \right)}{{{\pi }_{{{\theta }_{old}}}}\left( a|s \right)}$ represents the ratio of the current policy to the old policy. $A_t$ is the estimated value of the advantage function, which measures the advantage of taking action $a_t$ in state $s_t$; $\epsilon $ is the cropping threshold that controls the magnitude of the policy update.

At the micro level, AUVs must collaborate to control power and trajectory. We modeled a POMDP in previous subsection, whose goal is to learn a policy ${{\pi }_{{{\theta }_{m}}}}\left( {{a}_{m}}|{{o}_{m}} \right)$ that selects the optimal power and speed control action $a$ based on its local observation $o_m$, in order to maximize the long term cumulative micro reward $R_{micro}$ while satisfying constraints such as covertness.

\subsubsection{Estimation of value function}
In order to accurately assess the value corresponding to a state or state-action and to effectively guide strategy learning, both PPO and MAPPO use a value network (Critic). For a critic network at the macro level, the global state observation $S_{macro}$ is input, and the estimate of the value of the state is output: ${{V}_{{{\phi }_{AUV}}}}\left( {{S}_{macro}} \right)$.

In the MAPPO framework, although each AUV's strategy is decentralized, its value function is usually estimated by a central critic network. This critical network has access to the global state information as well as the joint actions of all AUVs, thus evaluating the overall value of the joint behavior. 

\begin{algorithm}[t]
\SetAlgoLined
\small
\KwResult{The trained hierarchical policies: central AUV's actor $\pi_{cAUV}(\theta_{cAUV})$ and AUVs' actors $\{\pi_i(\theta_i)\}_{i=1}^M$.}
Initialize central AUV's network $\pi_{cAUV}(\theta_{cAUV})$,$V_{cAUV}(\phi_{cAUV})$\;
Initialize AUV's networks $\{\pi_i(\theta_i),V_i(\phi_i)\}_{i=1}^M$ with centralized critics;
Initialize experience buffer $\mathcal{B}_{cAUV}$ and $\mathcal{B}_{AUVs}$\;
\For{episode = 1, 2, \ldots, max\_episodes}{
    Reset environment and get initial macro-level state $S_1$\;
    \For{macro-timestep t = 1, 2, \ldots, max\_macro\_steps}{
        Select cAUV schedule $G_t \sim \pi_{cAUV}(S_t; \theta_{cAUV})$\;
        Initialize micro-level environment for task $t$, get initial local observations $\{o_1^i\}_{i=1}^M$\;
        
        \For{micro-timestep $\tau$ = 1, 2, \ldots, max\_micro\_steps}{
            Get global state for centralized critic $s_\tau$\;
            \For{AUV i = 1, 2, \ldots, M}{
                \If{AUV i is selected in $G_t$}{
                    AUV $i$ selects micro-action $a_\tau^i \sim \pi_i(o_\tau^i; \theta_i)$\;
                }
            }
            Execute joint action $\{a_\tau^i\}$, get rewards $\{r_\tau^i\}$ and next local observations $\{o_{\tau+1}^i\}$\;
            Store transition $(s_\tau, \{o_\tau^i\}, \{a_{\tau+1}^i\}, \{r_\tau^i\}, s_{\tau+1})$ into $\mathcal{B}_{AUVs}$\;
        }
        
        Get state $S_{t+1}$ and macro-level reward $R_t$\;
        Store $(S_t, G_t, R_t, S_{t+1})$ into $\mathcal{B}_{AUV}$\;
        
        \If{$\mathcal{B}_{AUVs}$ is ready for update}{
            \For{k = 1, 2, \ldots, training\_epochs}{
                Sample a mini-batch of micro-level transitions from $\mathcal{B}_{AUVs}$\;
                \For{AUV i = 1, 2, \ldots, M}{
                    Compute advantage estimates $\hat{A}_\tau^i$ using GAE according to (\ref{GAE})\;
                    Compute critic loss $L(\phi_i)$ according to (\ref{lcritic})\;
                    Compute actor loss $L^{CLIP}(\theta_i)$ according to (\ref{lclicp})\;
                    Update critic network $\phi_i$ by descending $\nabla_{\phi_i}L(\phi_i)$ according to (\ref{critic})\;
                    Update actor network $\theta_i$ by ascending $\nabla_{\theta_i}L^{CLIP}(\theta_i)$ according to (\ref{actor})\;
                }
            }
            Clear AUV buffer $\mathcal{B}_{AUVs}$\;
        }
        
        \If{$\mathcal{B}_{cAUV}$ is ready for update}{
            Update central AUV's actor and critic networks using data from $\mathcal{B}_{cAUV}$ (via PPO)\;
            Clear central AUV buffer $\mathcal{B}_{cAUV}$\;
        }
    }
}
\caption{The Proposed HMAPPO Algorithm} \label{alg:hmapoo}
\end{algorithm}

\subsubsection{Optimization objectives for critic networks}
Both the macro level central AUV and the micro level AUVs' critic
update their parameters $\phi$ by minimizing the mean squared error (MSE) between their value prediction and the target return $R_t$:
\begin{equation} \label{lcritic}
    L\left( {{\phi }} \right)={{\mathbb{E}}_{t}}\left[ {{{R}_{t}}-{\left( {{V}}\left( {{s}_{t}};{{\phi }} \right) \right)}^{2}} \right].
\end{equation}
where $R_t$ denotes the cumulative discounted return from the current time step $t$ to the end of the future trajectory. In practice, ${\hat{A}_t}$ is typically computed based on the difference between the target return $R_t$ and the current value estimate ${{V}}\left( {{s}_{t}};{{\phi }} \right)$:
\begin{equation} \label{Rt}
    {\hat{A}_t}={{R}_{t}}-{{V}}\left({{s}_{t}};{{\phi }} \right).
\end{equation}
$V$ is the output of the target critic network, and $\phi$ denotes its parameters.

\subsubsection{Computation of the advantage function}
The advantage function A is used to measure the relative quality of taking action $a$ in state $s$, compared to the average action under the current policy. As previously mentioned, we adopt the Generalized Advantage Estimation (GAE):
\begin{equation} \label{GAE}
    {\hat{A}_t}={{\sum\limits_{l=0}^{\infty }{\left( \gamma \lambda  \right)}^{l}}}{{\delta }_{t+l}}.
\end{equation}
Here, the temporal difference (TD) error is defined as:
\begin{equation}
    {{\delta }_{t+l}}={{r}_{t+l}}+\gamma V\left( {{s}_{t+l+1}};\phi  \right)-V\left( {{s}_{t+l}};\phi  \right).
\end{equation}
$L$, $\gamma$, and $\lambda$ denote the time horizon for advantage estimation, the discount factor, and the smoothing parameter in GAE, respectively.

\subsubsection{Policy Improvement and Network Updates}
Based on the computed advantage ${\hat{A}_t}$, the actor networks are improved through policy gradient ascent. The objective for the actors, as shown in equation (\ref{lclicp}) for the central AUV, is to maximize the clipped surrogate objective. Simultaneously, the critic networks are optimized by descending the gradient of their value function loss, which is defined as the mean squared error between the value prediction and the target return $R_t$ from (\ref{Rt}).

The network parameters are updated according to (\ref{lclicp})(\ref{lcritic}). The critic aims to minimize $ L\left( {{\phi }} \right)$, while the actor aims to maximize ${L}^{CLIP}\left( \theta  \right)$. In our implementation, an entropy bonus is also incorporated into the actor's objective to encourage exploration. The update rules for an actor with parameters $\theta$ and a critic with parameters $\phi$ are thus defined as:
\begin{equation}\label{critic}
\phi \leftarrow \phi - \alpha_c \nabla_{\phi} L(\phi),
\end{equation}
\begin{equation} \label{actor}
\theta \leftarrow \theta + \alpha_a \nabla_{\theta} L^{\mathrm{CLIP}}(\theta),
\end{equation}
where $\alpha_a$ and $\alpha_c$ are the learning rates for the critic and actor networks, respectively. 

The detailed procedure of our proposed Hierarchical Multi-Agent Proximal Policy Optimization (HMAPPO) framework is summarized in \textbf{Algorithm \ref{alg:hmapoo}}.

\begin{figure*}[t]
	\centering
    	\captionsetup{font={small}} 
	\includegraphics[width=2\columnwidth]{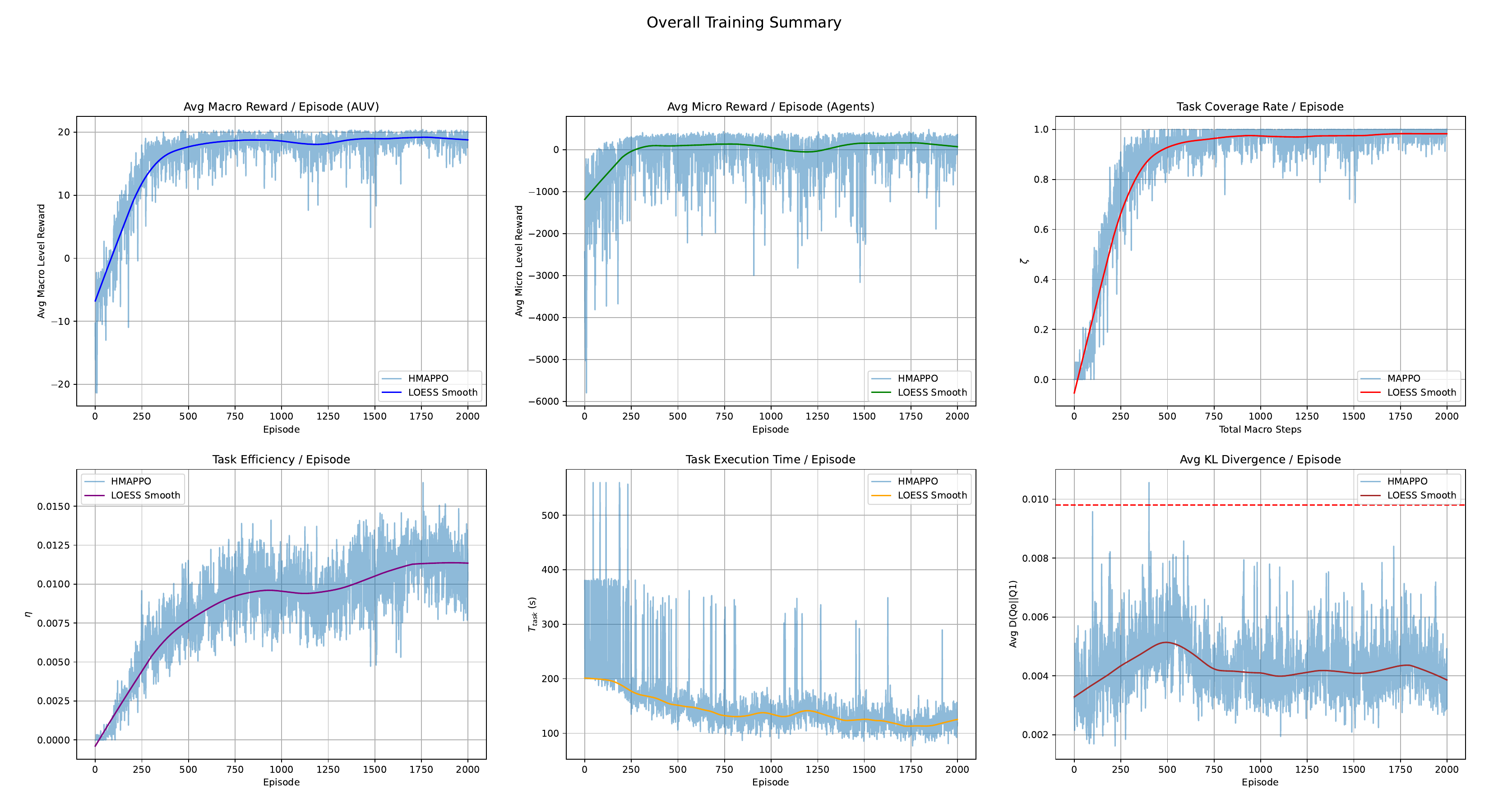}
	\caption{Convergence analysis of the proposed HMAPPO. Subplots show the average episodic trends for: the central AUV's macro-level reward, the AUVs' micro-level reward, the task coverage rate $\zeta$, the task efficiency $\eta$, the task execution time $T_{task}$, and the average KL divergence.}
    \label{train}
\end{figure*}

\subsection{Centralized Training and Decentralized Execution}
Our HMAPPO framework is designed under the CTDE paradigm to achieve both effective training and efficient deployment.

In the \emph{centralized training phase}, we leverage the full observability of the simulation environment. As previously mentioned, the critic networks of all micro-level AUVs have access to the global state information $s_t$ during the learning process. This enables them to learn a comprehensive and accurate value function $V$, which in turn provides a stable and consistent learning signal for all decentralized actors. This mechanism effectively mitigates the inherent non-stationarity issue in multi-AUV systems, where the policies of other AUVs are constantly changing.

In the \emph{decentralized execution phase}, each AUV $m$ (including the central AUV) relies solely on its pre-trained actor network $\pi_\theta(a_i \mid s_i / o_i)$. Based on its local observation (for AUVs) or global observation (for the central AUV), the AUV can make decisions through a single forward propagation. The mechanism ensures that the online decision-making process is fast and requires minimal computational resources.


\section{Numerical Results and Discussion} \label{result}
\subsection{Simulation Setup}
To thoroughly evaluate the effectiveness of the proposed HMAPPO framework and assess its performance advantages over baseline methods, we conducted comprehensive comparative experiments and ablation studies. We built a high-fidelity 3D underwater simulation environment that models a dynamic water body of size $200m \times 200m \times 100m$. The environment includes a central AUV positioned at $(0m,0m,-10m)$. $M$ AUVs randomly deployed in the space, and a eavesdropper located at $(75m, 75m, 5m)$. A current field is generated by a physics-based dynamic model that simulates complex ocean currents, and all communication and energy consumption are computed based on the physical models described earlier. During each training episode, the system is tasked with completing a sequence of $T$ consecutive macro-level tasks, where each task requires AUVs to collaboratively explore a $L \times L$ target region.
The related parameters for our proposed algorithm are presented in Tables \ref{parameters1} and \ref{hyperparams}. 
\begin{table}[thb]
	\centering
    \captionsetup{font={small}} 
	\caption{System parameters}
	\label{parameters1}
	\small
	\begin{tabular}{l | l || l | l}
		\noalign{\global\arrayrulewidth=0.3mm}
		\hline
		\textbf{Parameter}  &\textbf{Value } &\textbf{Parameter}  & \textbf{Value } \\
		\hline
		 $M$   & 6 &  $E^{init}$   &$10000\sim 20000$\\
		 $V_{max}$  &5m/s &   $P_{max}$&  $2$\\
         $r_b$  &5  &    $C$&   10 \\
		$A$  & 0.1 & $C_d$ & 0.8 \\
        $\Delta \tau$     &2s & $L$        &30 m\\   
		 $\eta_e$   &0.5  & $B$    &     10 MHz\\   
        ${\chi}$ &1.5  &  $f$ &    30 kHz\\
            $\lambda$ &0.7&   $N_0$ &0.2 W \\
        $\varepsilon$   &0.05    &  $r_b$ &   5 m\\
            $\mu$ &10&   $C_m$ &5 \\
		\noalign{\global\arrayrulewidth=0.3mm}
		\hline
	\end{tabular}
\end{table}	

\begin{table}[t]
\centering
\captionsetup{font={small}} 
\caption{Hyperparameters settings}
\label{hyperparams}
\small
\begin{tabular}{l | c || l | c}
\noalign{\global\arrayrulewidth=0.3mm}
\hline
\textbf{Parameter} & \textbf{Value} & \textbf{Parameter} & \textbf{Value} \\
\hline
Episodes & 2000 & Macro Steps & 10 \\
Micro Steps & 100 & $\alpha_{\text{a}}$ & $3 e{-5}$ \\
$\alpha_{\text{c}}$ & $5 e{-5}$ & $\gamma$ & 0.99 \\
$\epsilon_{\text{clip}}$ & 0.2 & PPO Epochs & 8 \\
Batch Size(AUV) & 512 & Batch Size(cAUV) & 16 \\
Update(AUV) & 2048 & Update(cAUV) & 32 \\
Hidden Size(Actor) & 384 & Hidden Size(Critic) & 512 \\
Hidden Size(cAUV) & 256 & $\lambda_{\text{GAE}}$ & 0.95 \\
Entropy Coeff. & 0.01 & & \\
\noalign{\global\arrayrulewidth=0.3mm}
\hline
\end{tabular}
\end{table}

\begin{figure*}[t]
	\centering
    	\captionsetup{font={small}} 
	\includegraphics[width=2.05\columnwidth]{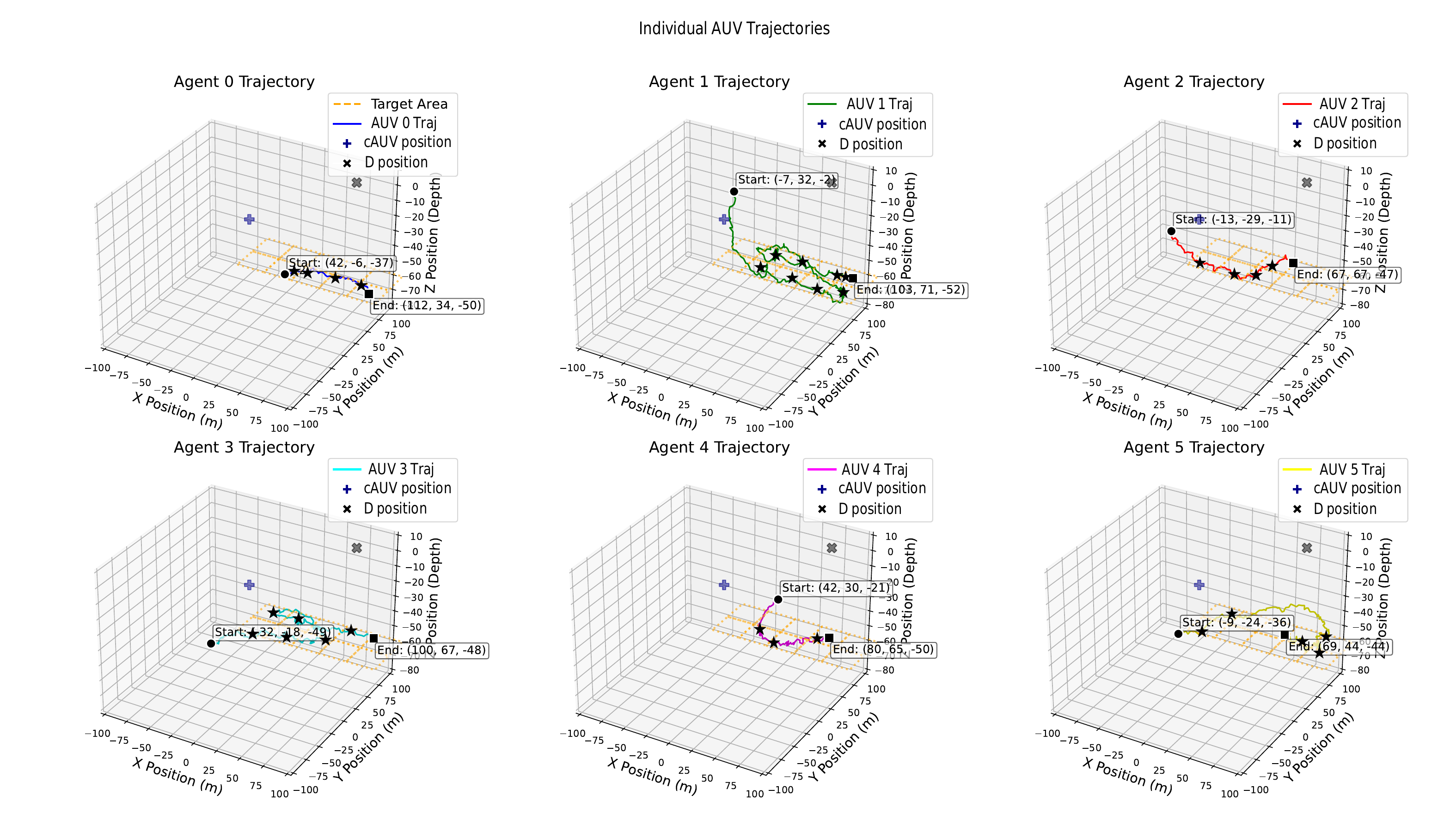}
	\caption{Individual AUV trajectories during the final training episode. The black stars mark the exact position where the AUVs first reached their sub-target areas.}
    \label{trajectory}
\end{figure*}

\subsection{Convergence Performance}
\begin{figure}
    \centering
    	\captionsetup{font={small}} 
    \includegraphics[width=1\linewidth]{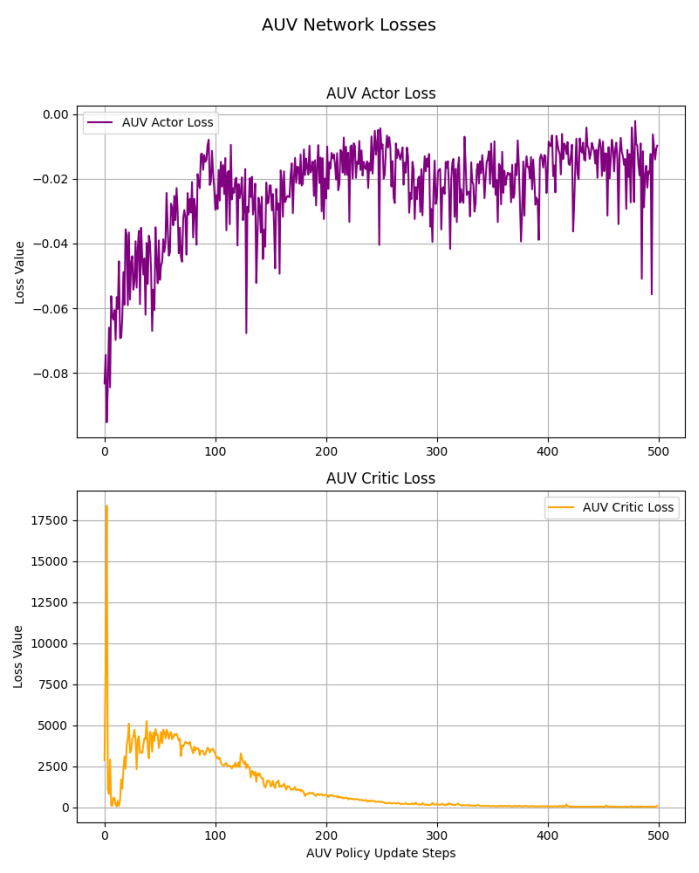}
    \caption{The central AUV critic and actor loss.}
    \label{loss}
\end{figure}
We begin by evaluating the convergence properties of the proposed HMAPPO framework. Fig.~\ref{train} illustrates the evolution of six key performance metrics during training. As shown in Fig.~\ref{train}(b), the average macro-level reward of the central AUV increases significantly from approximately $–20$ to $18$ within the first 500 training iterations. It then stabilizes around $20$ in the later stages of training.
This trend suggests that the central AUV has gradually developed an effective policy for making macro-level decisions.
Correspondingly, the AUVs at the micro level demonstrate a similar convergence trend. Their average micro-level reward improves significantly, rising from an initial value of approximately $–5000$ to a stable region close to zero. This indicates that the AUVs have successfully learned fine-grained control strategies. These strategies enable them to complete their navigation tasks with minimal cost while still satisfying the covertness constraints.

The task coverage ratio Fig.~\ref{train}(c) and task efficiency Fig.~\ref{train}(d) serve as two core metrics for evaluating task completion quality. Both metrics further validate the effective convergence of the proposed algorithm. The coverage rate $\zeta$ increases rapidly during the early stages of training and saturates at nearly 100\% after approximately 500 iterations. This indicates that the AUV teams selected by the central AUV are able to collaboratively achieve near-complete coverage of the exploration region. Our optimization objective, $\eta$, also exhibits a smooth upward trend and eventually converges. In addition, the improvement in task efficiency is attributed not only to better area coverage but also to the system's optimization of task completion time. The average task duration decreases significantly—from over $400$ seconds initially to around $120$ seconds—resulting in an overall latency reduction of approximately 70\%. This demonstrates that our framework can effectively guide the AUV team to reach their targets quickly, accurately, and with low energy consumption, while avoiding task failure and energy depletion caused by being trapped in local optimal but inefficient trajectories.

 \begin{figure*}[t]
	\centering
    	\captionsetup{font={small}} 
	\includegraphics[width=2\columnwidth]{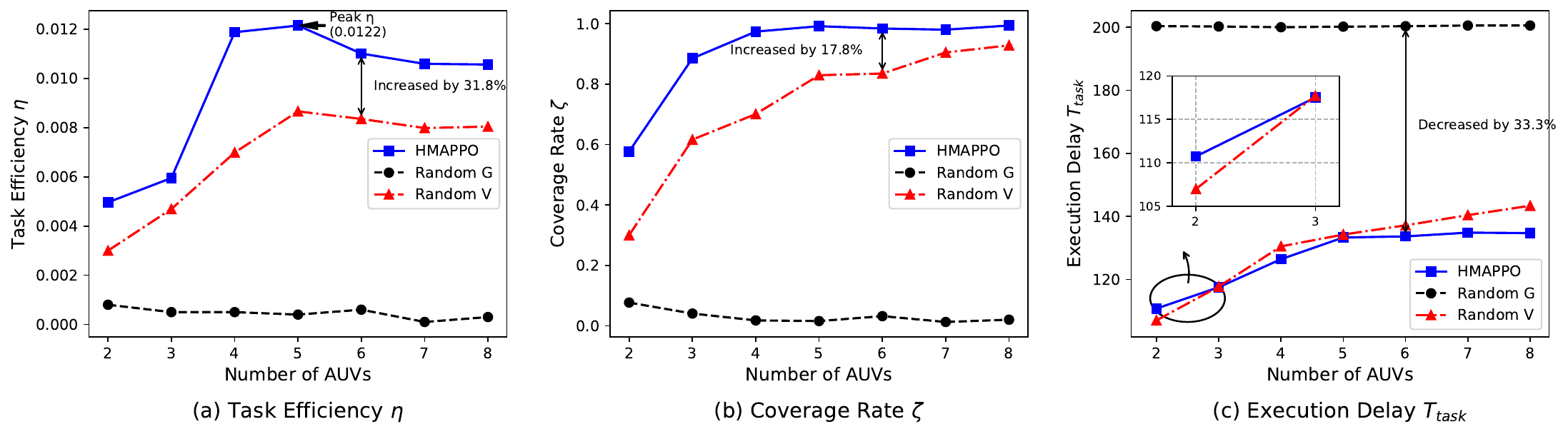}
	\caption{Comparison of collaborative performance under increasing numbers of AUVs in terms of task efficiency, task completion rate, and task execution time, respectively.}
    \label{agent}
\end{figure*}

 \begin{figure*}[t]
	\centering
    	\captionsetup{font={small}} 
	\includegraphics[width=2\columnwidth]{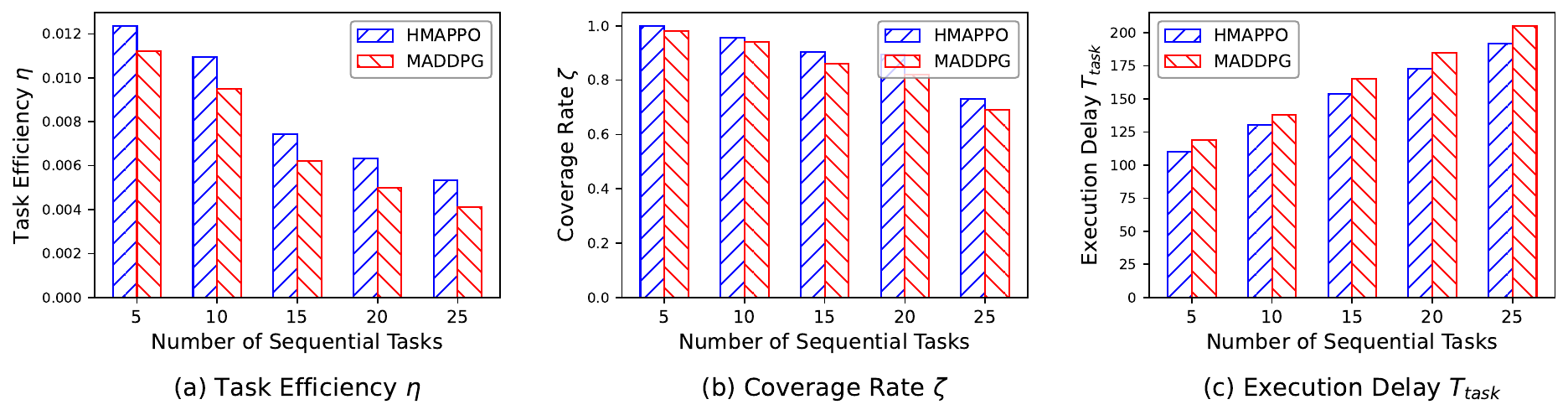}
	\caption{Comparison of collaborative performance under increasing task loads in terms of task efficiency, task completion rate, and task completion time, respectively.}
    \label{task}
\end{figure*}

At the macro level, the central AUV selects and dispatches well-conditioned AUVs through centralized control, thereby ensuring the team's ability to respond quickly and execute tasks efficiently. Fig.~\ref{train}(f) presents the KL divergence curve, which quantitatively demonstrates that our framework can optimize task performance while satisfying communication covertness constraints. During training, the average KL divergence induced by all AUVs' communication behavior is effectively constrained below the predefined threshold of covertness. Even in the early stages of training, when the policy network is still exploring, the peak values of the KL divergence rarely exceed this constraint. Once the policy converges, the average KL divergence stabilizes at a low level of approximately $0.004$. This indicates that our algorithm successfully internalizes the covertness constraint into the learned policy, enabling AUVs to communicate only when necessary and with appropriate power levels.
Meanwhile, both the actor and critic loss curves of the central AUV in Fig.~\ref{loss} exhibit healthy convergence trends: the critic loss rapidly decreases from an initially high value and stabilizes, while the actor loss remains steady within a negative range around $-0.02$, indicating effective learning under the PPO algorithm.

The proposed hierarchical framework achieves stable performance convergence through effective coordination between macro-level decision-making and micro-level execution. Specifically, the convergence of macro-level rewards depends on the successful task execution by micro AUVs, while the convergence of micro-level rewards relies on the central AUV's effective decision-making based on global state information. This two-tier optimization mechanism enables the system to strike an efficient balance among multiple interdependent variables. Ultimately, the simulation results demonstrate that the framework can guide the AUV team to accomplish the assigned tasks efficiently and completely, while adhering to the constraints of a complex dynamic environment and covert communication requirements.

\subsection{AUV Trajectory Analysis}
To intuitively understand the behavioral strategies learned by the AUVs after convergence, Fig.~\ref{trajectory} visualizes the 3D trajectories of AUVs participating in tasks during the final training episode, across $T$ macro task cycles. The figure clearly shows that the AUVs do not follow simple straight-line paths. Instead, they generate complex and smooth curves to reach their respective target points. These nonlinear trajectories reflect the advanced navigation strategies that the AUVs have learned to adapt to and exploit the complex dynamic ocean currents in their environment. By dynamically adjusting their thrust direction and magnitude, the AUVs can overcome or leverage water flow in a way that minimizes both energy consumption and task completion time.

As illustrated in the Fig.~\ref{trajectory}, all AUV trajectories exhibit strong goal-directed behavior. Each AUV successfully navigates from its initial position to the designated target region assigned by the central AUV, without any signs of drifting or inefficient wandering. This result also provides an intuitive exploration for the nearly 
 coverage rate of the zeta metric shown in Fig.~\ref{train}(c). The AUVs' precise and efficient path planning and execution at the micro level serve as the fundamental guarantee for accomplishing collaborative tasks at the macro level.

 \subsection{System Scalability}
 To ensure that the multi-AUV system can cope with scale changes, this section systematically evaluates the system's scalability in terms of both the number of AUVs and the scale of tasks. Fig.~\ref{agent} shows the performance of the system when the number of AUVs ranges from 2 to 8.
 For valid comparisons, we designed two baseline methods that were used simultaneously for scalability assessment and analysis of ablation experiments.
 \begin{itemize}
     \item \textbf{Random decision making G}: This method simulates the scenario without AUV scheduling optimization, i.e., the macro level central AUV randomly selects an available AUV to participate in the task.
     \item \textbf{Random AUV velocity V}: This method simulates the lack of precise speed control of AUVs at the micro level, where their movement velocity $V$ is randomly generated.
 \end{itemize}
The simulation result clearly shows that our proposed HMAPPO framework significantly outperforms the above two baseline methods in all indicators. The optimization objective $\eta$ improves the system performance by (31.8\%, 429.73\%) compared with the two baseline methods, which fully verifies the necessity of collaborative optimization between macro-scheduling and micro-execution: Without global coordination and task assignment at the macro level, the optimization objectives of the micro-level policies can become fragmented or even conflicting, leading to a decline in overall system efficiency. Conversely, without accurate execution of macro-level strategies at the micro level, task performance and energy efficiency cannot be guaranteed, ultimately undermining the overall effectiveness of the system.

In addition, the HMAPPO framework exhibits adaptive behavior as the number of AUVs increases. The task completion rate $\eta$ reaches its peak (approximately 0.012) when the number of AUVs is $4$. However, as the number increases to $8$, $\eta$ remains stable or even slightly decreases. This may reflect a higher-level emergent intelligence: the central AUV has learned to optimize resource allocation. When sufficient AUVs are available, the central AUV does not indiscriminately select all units. Instead, it intelligently selects a sub-team of appropriate size. For example, in the current task sequence, it typically selects around 4–5 AUVs to participate. Since the coverage rate $\zeta$ already approaches $100\%$, adding more AUVs yields diminishing marginal benefits in coverage improvement. On the contrary, it may slightly increase the task execution time due to higher coordination complexity. By effectively decoupling macro-level team formation from micro-level task execution, the system consistently selects near-optimal configurations for task execution, demonstrating strong robustness and scalability.

To more comprehensively evaluate the scalability of our proposed HMAPPO framework, we designed a comparative experiment against another DRL algorithm. Specifically, we explored the performance of task efficiency $\eta$, task completion rate $\zeta$, and execution delay for collaborating on a series of $(5-25)$ predefined tasks when the number of AUVs $M=6$. The baseline  {MADDPG} algorithm \cite{MADDPG} adopts the CTDE architecture and constructs a central critic for each AUV to access global information and evaluate the value of joint actions during the training process; the AUV's actor makes decisions based on local observations during execution.

Fig.~\ref{task} illustrates the performance of the two algorithms under different task loads, in terms of the composite metric task completion rate $\eta$. Our HMAPPO algorithm shows overall superiority under all task loads. Compared with MADDPG, ours $\eta$ always stays ahead, and this advantage is significant with the increase of task load, and the performance improvement extends from about $25\%$ at low task load to nearly $40\%$ at high task load.
Moreover, it can be observed that the $\eta$ decreases gently as the task load increases. At a low task load of $5$ tasks, the system achieves an average task completion rate of $99.8\%$. The system exhibits a plateau in the medium-to-high task load region, ranging from $ 10$ to $20$ tasks. It is possible that our framework has learned to strategically reserve resources in the early stages of a task, sacrificing a small amount of task coverage to avoid premature energy exhaustion of critical AUVs. Until the task load reaches $25$, all the AUVs are exhausted, and $\zeta$ shows a significant drop. In contrast, MADDPG declines more rapidly during the high task load phase, possibly due to its weaker long-term energy management capabilities.
The average execution delay of $T_{task}$ shows a continuous and significant increase from about $110s$ to about $192s$. The MADDPG algorithm exhibits a similar trend, but with a higher average execution delay compared to our algorithm. This is mainly due to the “path dependency” between tasks. The higher task load means that AUVs need to maneuver across the region over longer distances, which inevitably increases the average time spent moving.

\subsection{Covertness Quantification}
 \begin{figure}
    \centering
    	\captionsetup{font={small}} 
    \includegraphics[width=1\linewidth]{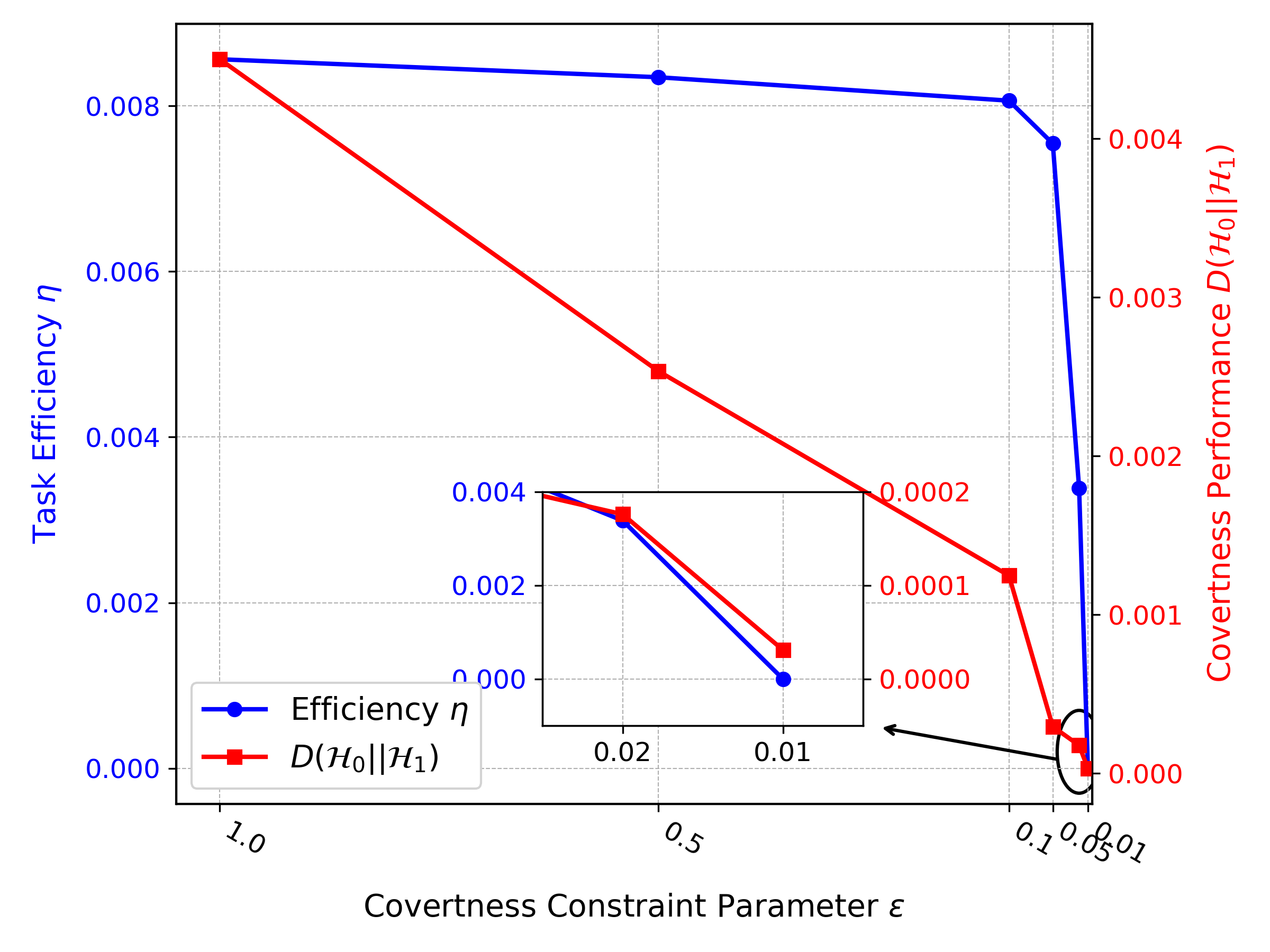}
    \caption{The effect of the covertness constraint parameter on task efficiency and covertness performance.}
    \label{covert}
\end{figure}
To evaluate the performance of our proposed HMAPPO framework under varying levels of covert requirements, we investigate the relationship between system coordination efficiency and communication covertness (measured by average KL divergence) across a set of discrete covertness constraint parameters $\varepsilon \in \{1,0.5,0.1,0.05,0.02,0.01\}$. 
As shown in Fig.~\ref{covert}, when there is no covertness constraint or the constraint is relatively loose, the system prioritizes coordination performance, and task efficiency remains consistently near the performance upper bound of approximately $0.0085$.
Under these conditions, AUVs can leverage higher transmission power to maintain high-quality communication, thereby optimizing overall coordination efficiency. However, this efficient collaboration comes at the expense of covertness, and its high KL divergence indicates that the system's communication behavior is at a higher risk of detection.
As the covertness constraint is gradually tightened (from $0.5$ to $0.05$), the system's behavioral policy undergoes significant changes. HMAPPO is capable of strategically regulating the transmission power of AUVs, enabling the KL divergence to steadily decrease to below $0.001$, thereby effectively ensuring communication covertness.

However, when the $\varepsilon$ enters the stringent range of $0.05$ to $0.01$, a performance cliff phenomenon can be observed. As shown in the magnified inset of the figure, task efficiency $\eta$ experiences a steep drop. As $\varepsilon$ decreases from $0.05$ to $0,02$, $\eta$ falls from $0.0075$ to $0.0034$, which represents a decline of over $54.67\%$. This indicates that the system’s coordination capability suffers fundamental degradation in order to meet extremely strict covertness requirements. Within this range, AUVs are forced to adopt ultra-low transmission power, rendering the acoustic communication links critical for central AUV command and control highly unreliable. As a result, disseminating task commands, real-time reporting of AUV states, and transmitting detection results become exceptionally difficult. These disruptions directly lead to the failure of certain sub-tasks, ultimately causing the overall coordination efficiency to approach zero.

\subsection{Multi-Dimensional Performance Evaluation}
\begin{figure}
    \centering
    	\captionsetup{font={small}} 
    \includegraphics[width=1\linewidth]{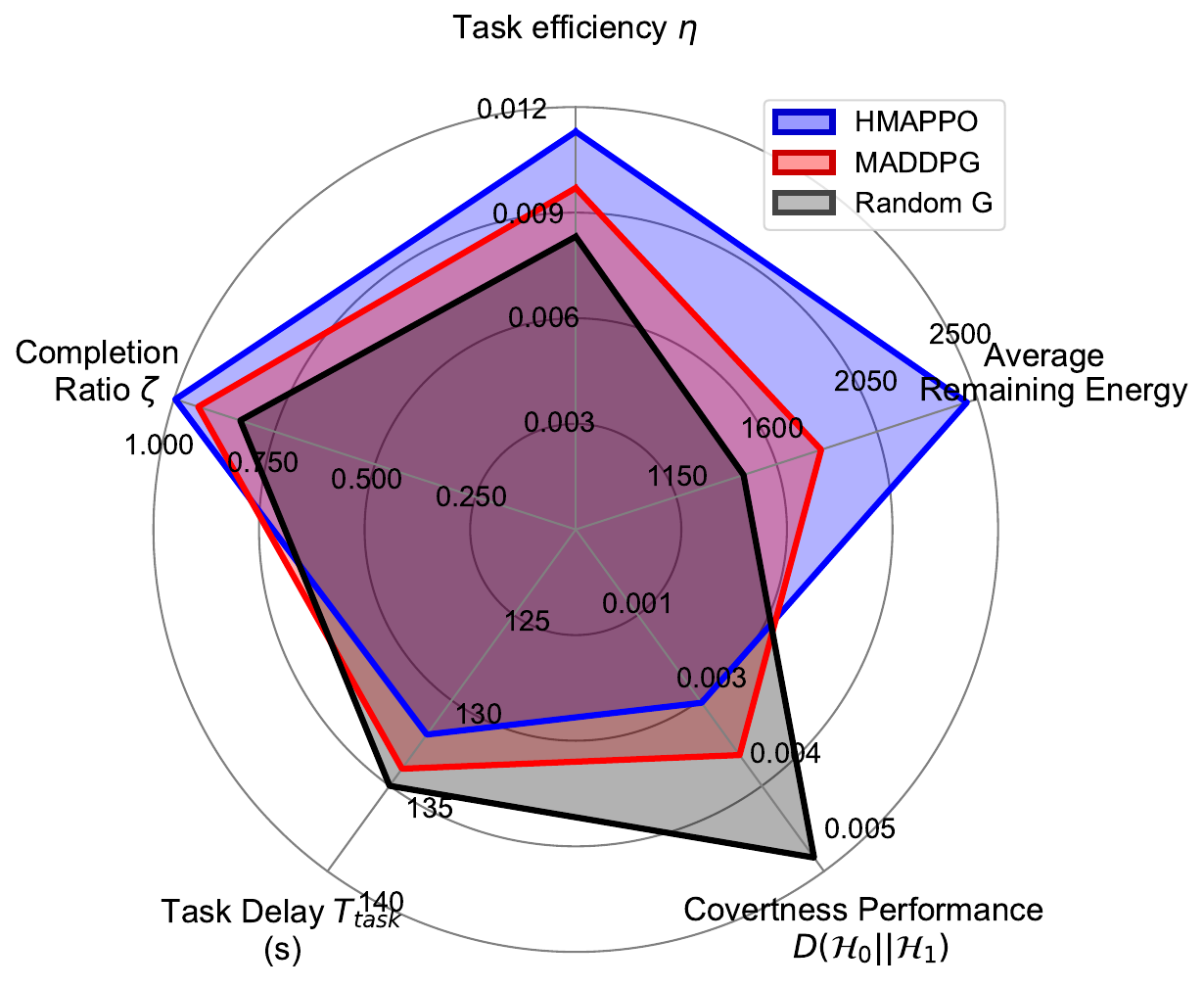}
    \caption{Comprehensive trade-off comparison of different algorithms across five key performance dimensions.}
    \label{radar}
\end{figure}
To comprehensively evaluate the overall effectiveness of our proposed HMAPPO framework from a global perspective, this section provides an intuitive visualization of the trade-offs between five key objectives: task efficiency $\eta$, task completion ratio $\zeta$, average remaining energy, task delay $T_{task}$, and covertness performance $D( {\mathcal{H}_{0}}||{\mathcal{H}_{1}})$. For benefit-oriented metrics ($\eta$, $\zeta$, and average remaining energy), data points farther from the center indicate better performance, whereas for cost-oriented metrics ($T_{task}$ and $D( {\mathcal{H}_{0}}||{\mathcal{H}_{1}})$), points closer to the center represent superior outcomes. 

Fig.~\ref{radar} illustrates that our proposed HMAPPO framework (blue area) demonstrates the most balanced and optimal trade-off strategy, with its performance envelope outperforming all baseline algorithms across every dimension. In terms of task-related performance, HMAPPO achieves near-perfect execution, with both task efficiency and completion ratio reaching the performance boundary. More importantly, this exceptional task performance is achieved with high energy efficiency, indicating that HMAPPO not only discovers optimal cooperation and trajectory strategies but also precisely manages energy consumption at the micro-level.
In terms of execution cost, the HMAPPO also performs exceptionally well. It completes tasks with the lowest delay and achieves the smallest KL divergence, ensuring the best covertness. In comparison, MADDPG (red area) exhibits a relatively balanced strategy. However, its higher energy consumption and longer delay increase the risk of exposure. The random strategy (black area) performs the worst in all aspects, showing very poor trade-off ability.

\section{Conclusions} \label{conclusion}
This paper studied efficient multi-AUV coordination for target detection in complex underwater acoustic environments while maintaining cooperative communication confidentiality. A novel hierarchical multi-AUV PPO framework was presented that decomposes long-horizon tasks into macro-level AUV scheduling and micro-level AUV trajectory control. By leveraging the CTDE paradigm, the proposed framework effectively designs coordination strategies that strike a balance between collaboration detection performance and cooperative communication covertness. The presented extensive simulation results verified that the presented HMAPPO framework converges efficiently, enabling AUVs to collaboratively complete predefined tasks while adhering to covertness constraints. The comparative experiments with baseline algorithms demonstrated that our method achieves superior performance across multiple optimization objectives under varying task loads. In addition, the inherent trade-off between coordination efficiency and communication covertness was analyzed, identifying a critical threshold region where the system can accomplish complex cooperative tasks efficiently while maintaining a sufficiently reliable level of covertness.

\ifCLASSOPTIONcaptionsoff
  \newpage
\fi

\vspace{12pt}


%

%
%
%




\end{spacing}
\end{document}